\def\eqref#1{equation~\ref{#1}}
\def\1{\bm{1}}
\DeclareMathAlphabet{\mathsfit}{\encodingdefault}{\sfdefault}{m}{sl}
\SetMathAlphabet{\mathsfit}{bold}{\encodingdefault}{\sfdefault}{bx}{n}
\newcommand{\R}{\mathbb{R}}
\definecolor{rightblue}{RGB}{76,114,176} 
\definecolor{rightorange}{RGB}{221,132,82} 
\definecolor{aliceblue}{rgb}{0.94, 0.97, 1.0} 
\definecolor{darkcerulean}{rgb}{0.03, 0.27, 0.49} 
\definecolor{iris}{rgb}{0.35, 0.31, 0.81} 
\definecolor{carmine}{rgb}{0.59, 0.0, 0.09} 
\definecolor{green(munsell)}{rgb}{0.0, 0.66, 0.47} 
\definecolor{celadon}{rgb}{0.67, 0.88, 0.69} 
\definecolor{bluerow}{rgb}{0.0, 0.53, 0.74} 
\definecolor{lightorange}{RGB}{255, 219, 187} 
\definecolor{lavenderblue}{rgb}{0.8, 0.8, 1.0}
\definecolor{blue(pigment)}{rgb}{0.2, 0.2, 0.6}
\definecolor{blue-violet}{rgb}{0.54, 0.17, 0.89}
\definecolor{blueseaborn}{RGB}{1,115,178}
\definecolor{orangeseaborn}{RGB}{222,143,6}
\definecolor{greenseaborn}{RGB}{1,158,115}
\newcommand{\defineReviewer}[2]{
    \definecolor{color-#1}{rgb}{#2}
    \definechangesauthor[name={#1}, color=color-#1]{#1}
    \expandafter\newcommand\csname #1Add\endcsname[1]{\added[id=#1]{##1}}
    \expandafter\newcommand\csname #1Rem\endcsname[1]{\deleted[id=#1]{##1}}
    \expandafter\newcommand\csname #1Rep\endcsname[2]{\replaced[id=#1]{##2}{##1}}
}
\theoremstyle{plain}
\newtheorem{theorem}{Theorem}[section]
\newtheorem{lemma}[theorem]{Lemma}
\theoremstyle{definition}
\newtheorem{definition}[theorem]{Definition}
\theoremstyle{remark}
\newtheorem{remark}[theorem]{Remark}
\DeclarePairedDelimiter\abs{\lvert}{\rvert}%
\DeclarePairedDelimiter\norm{\lVert}{\rVert}%
\let\oldabs\abs
\def\abs{\@ifstar{\oldabs}{\oldabs*}}
\let\oldnorm\norm
\def\norm{\@ifstar{\oldnorm}{\oldnorm*}}
\newcommand{\sPCA}{\textcolor{orangeseaborn}{DICL-$(s)$}\xspace}
\newcommand{\vICL}{\textcolor{blueseaborn}{vICL}\xspace}
\newcommand{\saPCA}{\textcolor{greenseaborn}{DICL-$(s,a)$}\xspace}
\newcommand{\DICL}{DICL\xspace}
\newcommand\rebuttal[1]{#1}
\title{Zero-shot Model-based Reinforcement Learning using Large Language Models}
\author{%
  Abdelhakim Benechehab\textsuperscript{{\normalfont $\dagger$}}\textsuperscript{{\normalfont 1}}\textsuperscript{{\normalfont 2}}, Youssef Attia El Hili\textsuperscript{{\normalfont 1}}, Ambroise Odonnat\textsuperscript{{\normalfont 1}}\textsuperscript{{\normalfont 3}}, Oussama Zekri\textsuperscript{{\normalfont $\ddagger$}}\textsuperscript{{\normalfont 4}}, \\ \textbf{Albert Thomas\textsuperscript{{\normalfont 1}}, Giuseppe Paolo\textsuperscript{{\normalfont 1}}, Maurizio Filippone\textsuperscript{{\normalfont 5}}, Ievgen Redko\textsuperscript{{\normalfont 1}}, Bal\'{a}zs K\'{e}gl\textsuperscript{{\normalfont 1}}}\\
  \textsuperscript{1} Huawei Noah's Ark Lab, Paris, France\\
    \textsuperscript{2} Department of Data Science, EURECOM \\
  \textsuperscript{3} Inria, Univ. Rennes 2, CNRS, IRISA \\
  \textsuperscript{4} ENS Paris-Saclay\\
  \textsuperscript{5} Statistics Program, KAUST \\ 
}
\begin{document}

\def\thefootnote{$\dagger$}\footnotetext{Correspondence to \href{mailto:abdelhakim.benechehab@gmail.com}{abdelhakim.benechehab@gmail.com}. \textsuperscript{$\ddagger$}Work done while at Huawei Noah's Ark Lab.}

\addtocontents{toc}{\protect\setcounter{tocdepth}{0}}

\maketitle

\begin{abstract}
The emerging zero-shot capabilities of Large Language Models (LLMs) have led to their applications in areas extending well beyond natural language processing tasks. 
In reinforcement learning, while LLMs have been extensively used in text-based environments, their integration with continuous state spaces remains understudied. 
In this paper, we investigate how pre-trained LLMs can be leveraged to predict in context the dynamics of continuous Markov decision processes. 
We identify handling multivariate data and incorporating the control signal as key challenges that limit the potential of LLMs' deployment in this setup and propose Disentangled In-Context Learning (DICL) to address them.
We present proof-of-concept applications in two reinforcement learning settings: model-based policy evaluation and data-augmented off-policy reinforcement learning, supported by theoretical analysis of the proposed methods.
Our experiments further demonstrate that our approach produces well-calibrated uncertainty estimates. We release the code at \href{https://github.com/abenechehab/dicl}{https://github.com/abenechehab/dicl}.

\end{abstract}

\begin{figure}[hb]
  \begin{center}
    \includegraphics[width=0.95\textwidth]{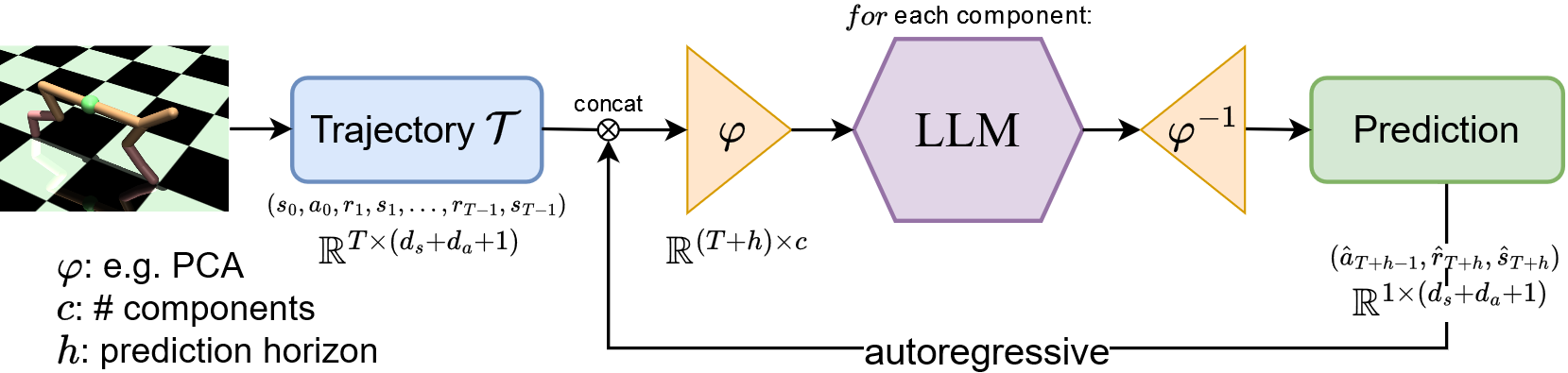}
  \end{center}
  \caption{\textbf{The DICL Framework.} \DICL projects trajectories into a disentangled feature space before performing zero-shot forecasting using a pre-trained LLM and in-context learning.}
  \label{fig:pipeline}
\end{figure}

\section{Introduction}
\label{secIntroduction}

The rise of large language models (LLMs) has significantly impacted the field of Natural Language Processing (NLP). LLMs~\citep{brown_language_2020,touvron2023llama,dubey2024llama3herdmodels}, which are based on the transformer architecture~\citep{vaswani2017attention}, have redefined tasks such as machine translation~\citep{brown_language_2020}, sentiment analysis~\citep{zhang2023sentimentanalysiseralarge}, and question answering~\citep{roberts-etal-2020-much,pourkamali2024machine} by enabling machines to understand and generate human-like text with remarkable fluency. One of the most intriguing aspects of LLMs is their emerging capabilities, particularly in-context learning (ICL)~\citep{von_oswald_transformers_2023}.  
Through ICL, an LLM can learn to perform a new task simply by being provided examples of the task within its input context, without any gradient-based optimization. This phenomenon has been observed not only in text generation but also in tasks such as image classification \citep{abdelhamed2024you, zheng2024large} and even solving logic puzzles \citep{giadikiaroglou2024puzzle}, which is unexpected in the context of the standard statistical learning theory. To our knowledge, ICL capabilities of pre-trained LLMs have been only scarcely explored in reinforcement learning \citep{wang_prompt_2023} despite the demonstrated success of the former in understanding the behavior of deterministic and chaotic dynamical systems \citep{liu_llms_2024}.

In this paper, we show how ICL with pre-trained LLMs can improve the sample efficiency of Reinforcement Learning (RL), with two proof-of-concepts in policy evaluation and data-augmented off-policy RL. Following the dynamical system perspective on ICL introduced in \cite{li2023transformers} and experimentally studied in \citet{liu_llms_2024}, we use the observed trajectories of a given agent to predict its future state and reward in commonly used RL environments. To achieve this, we solve two crucial challenges related to considering continuous state-space Markov Decision Processes (MDP): 1) incorporating the action information into the LLM's context and 2) handling the interdependence between the state-actions dimensions, as prior approaches were known to treat multivariate data's covariates independently. \rebuttal{Our framework, \DICL (Disentangled In-Context Learning), is summarized in \cref{fig:pipeline}. The core idea of DICL is to apply a feature space transformation, denoted as \( \varphi \), which captures the interdependencies between state and action features in order to disentangle each dimension. Subsequently, a Large Language Model (LLM) is employed to forecast each component independently in a zero-shot manner through in-context learning. Finally, the predictions are transformed back to the original trajectory space using the inverse transformation \( \varphi^{-1} \).}

Our approach leads to several novel insights and contributions, which we summarize as follows:
\begin{enumerate}
    \item \emph{Methodological.} We develop a novel approach to integrate state dimension interdependence and action information into in-context trajectories. This approach, termed  Disentangled In-Context Learning (\DICL), leads to a new methodology for applying ICL in RL environments with continuous state spaces. We validate our proposed approach on tasks involving proprioceptive control.
    
    \item \emph{Theoretical.} We theoretically analyze the policy evaluation algorithm resulting from multi-branch rollouts with the LLM-based dynamics model, leading to a novel return bound.
 
    \item \emph{Experimental.} We show how the LLM's MDP modeling ability can benefit two RL applications: policy evaluation and data-augmented offline RL. Furthermore, we show that the LLM is a calibrated uncertainty estimator, a desirable property for MBRL algorithms.
\end{enumerate}

\paragraph{Organization of the paper.} 
The paper is structured as follows:
\cref{sec:background} introduces the main concepts from the literature used in our work (while a more detailed related work is differed to \cref{appendix:rw}).
We then start our analysis in \cref{sec:motivation}, by analyzing LLM's attention matrices. 
\DICL is presented in \cref{subsec:dicl}, while 
\cref{sec:RL} contains different applications of the proposed method in RL, along with the corresponding theoretical analysis.
Finally, \cref{sec:Conclusion} provides a short discussion and future research directions triggered by our approach.

\section{Background knowledge}
\label{sec:background}

\textbf{Reinforcement Learning (RL).} The standard framework of RL is the infinite-horizon \textbf{Markov decision process (MDP)} $\mathcal{M} = \langle  \mathcal{S},   \mathcal{A}, P, r, \mu_0, \gamma \rangle$ where $\mathcal{S}$ represents the state space, $\mathcal{A}$ the action space, $P : \mathcal{S} \times \mathcal{A} \to \mathcal{S}$ the (possibly stochastic) transition dynamics, $r : \mathcal{S} \times \mathcal{A} \rightarrow \mathbb{R}$ the reward function, $\mu_0$ the initial state distribution, and $\gamma \in [0,1]$ the discount factor.
The goal of RL is to find, for each state $s \in \mathcal{S}$, a distribution $\pi(s)$ over the action space $\mathcal{A}$, called the \textit{policy}, that maximizes the expected sum of discounted rewards $\eta(\pi ) := \mathbb{E}_{s_0 \sim \mu_0, a_t \sim \pi, \, s_{t>0} \sim P^t}[ \sum_{t=0}^{\infty} \gamma^t r(s_t, a_t) ]$. Under a policy $\pi$, we define the state value function at $s \in \mathcal{S}$ as the expected sum of discounted rewards, starting from the state $s$, and following the policy $\pi$ afterwards until termination: $ V^\pi(s) = \mathbb{E}_{a_t \sim \pi,  s_{t>0} \sim P^t} \big[ \sum_{t=0}^\infty \gamma^{t} r(s_t,a_t) \mid s_0=s\big]$.

\textbf{Model-based RL (MBRL).} MBRL algorithms address the supervised learning problem of estimating the dynamics of the environment $\hat{P}$ (and sometimes also the reward function $\hat{r}$) from data collected when interacting with the real system. 
The model's loss function is typically the log-likelihood $\mathcal{L}(\mathcal{D}; \hat{P}) = \frac{1}{N} \sum_{i=1}^N \log  \hat{P}(s^i_{t+1}|s^i_t, a^i_t)$ or Mean Squared Error (MSE) for deterministic models. 
The learned model can subsequently be used for policy search under the MDP $\widehat{\mathcal{M}} = \langle  \mathcal{S}, \mathcal{A}, \hat{P}, r, \mu_0, \gamma \rangle$.
This MDP shares the state and action spaces $\mathcal{S}, \mathcal{A}$, reward function $r$, with the true environment $\mathcal{M}$, but learns the transition probability $\hat{P}$ from the dataset $\mathcal{D}$. 

\textbf{Large Language Models (LLMs).} Within the field of Natural Language Processing, Large Language Models (LLMs) have emerged as a powerful tool for understanding and generating human-like text. 
An LLM is typically defined as a neural network model, often based on the transformer architecture~\citep{vaswani2017attention}, that is trained on a vast corpus of sequences, $U = \{U_1, U_2, \ldots, U_i, \ldots, U_N\}$, where each sequence $U_i = (u_1, u_2, \ldots, u_j, \ldots, u_{n_i})$ consists of tokens $u_j$ from a vocabulary $\mathcal{V}$. 
Decoder-only LLMs \citep{Radford2019, dubey2024llama3herdmodels} typically encode an autoregressive distribution, where the probability of each token is conditioned only on the previous tokens in the sequence, expressed as $p_\theta(U_i) = \prod_{j=1}^{n_i} p_\theta(u_j | u_{0:j-1})$. 
The parameters $\theta$ are learned by maximizing the probability of the entire dataset, $p_\theta(U) = \prod_{i=1}^{N} p_\theta(U_i)$. Every LLM has an associated tokenizer, which breaks an input string into a sequence of tokens, each belonging to $\mathcal{V}$. 

 \begin{wrapfigure}{r}{0.6\textwidth} 
 \vspace{-0.8cm}
    \begin{minipage}{0.59\textwidth}
        \begin{algorithm}[H]
            \caption{$\text{ICL}_\theta$ \citep{liu2024llms,gruver2023large}}
            \label{alg:icl}
            \textbf{Input:} Time series $(x_i)_{i \leq t}$, LLM $p_\theta$, sub-vocabulary $\mathcal{V}_{\text{num}}$ \\
            \textbf{1.} Tokenize time series \hspace{0.2cm}
            $\hat{x}_t = ``x_1^1 x_1^2 \dots x_1^k, \dots"$ \; \\
            \textbf{2.} $\text{logits} \leftarrow p_\theta(\hat{x}_t)$ \;  \\
            \textbf{3.} $\{P(X_{i+1}|x_i,\hdots,x_0)\}_{i \leq t} \leftarrow \text{softmax}( \text{logits}(\mathcal{V}_{\text{num}}))$\;  \\
            \textbf{Return:} $\{P(X_{i+1}|x_i,\hdots,x_0)\}_{i \leq t}$
        \end{algorithm}
    \end{minipage}
\end{wrapfigure}

\rebuttal{\textbf{In-Context Learning (ICL).} In order to use trajectories as inputs in ICL, we use the tokenization of time series proposed in \cite{gruver2023large} and \cite{jin2024timellm}. This approach uses a subset of the LLM vocabulary $\mathcal{V}_{\text{num}}$ representing digits to tokenize the time series (\cref{alg:icl}). Specifically, given an univarite time series, we rescale it into a specific range~\citep{liu2024llms, zekri_can_2024, requeima_llm_2024}, encode it with $k$ digits, and concatenate each value to build the LLM prompt:}

\begin{center}
    $
\rebuttal{
\underbrace{[0.2513, 5.2387, 9.7889]}_{\text{time series}} \to \underbrace{[1.5, 5.16, 8.5]}_{\text{rescaled}} \to \underbrace{``150,516,850"}_{\text{prompt}}
}
$
\end{center}


\rebuttal{After the LLM forward pass, the logits corresponding to tokens in $\mathcal{V}_{\text{num}}$ can be used to predict a categorical distribution over the next value as demonstrated in \cite{liu_llms_2024}, thereby enabling uncertainty estimation.}


\section{Zero-shot dynamics learning using Large Language Models}
\label{sec:methodology}

\subsection{Motivation}
\label{sec:motivation}

\begin{figure}
  \begin{center}
    \includegraphics[width=0.8\textwidth]{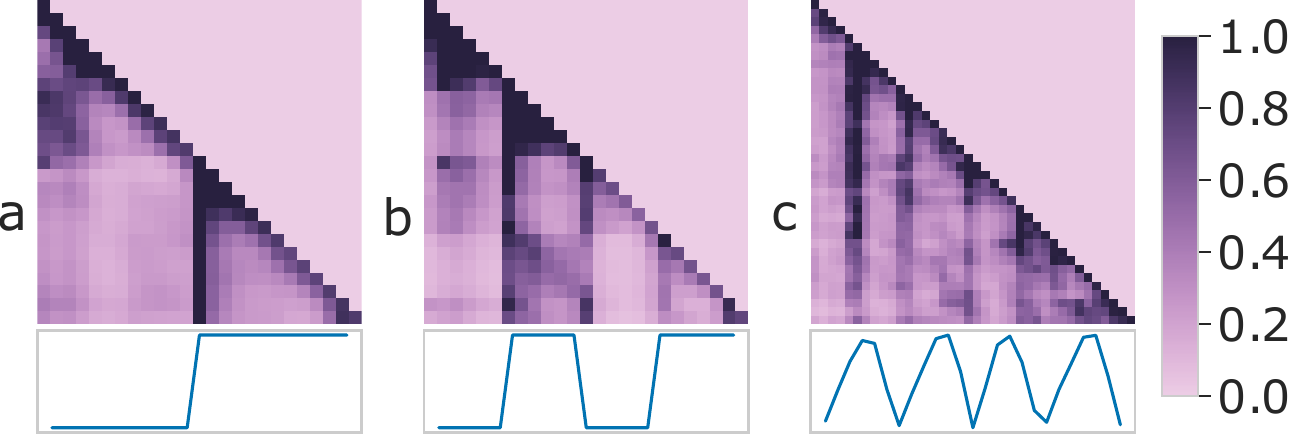}
  \end{center}
  \caption{\textbf{LLM can perceive time patterns.} The LLM \rebuttal{(\textit{Llama 3-8B})} is fed with $3$ time series presenting distinct patterns. \textbf{(a)} Rectangular pulse. \textbf{(b)} Rectangular signal with constant sub-parts. \textbf{(c)} The \emph{fthigh} dimension of HalfCheetah under an expert policy. Tokens belonging to constant slots (or peaks) attend to all the similar ones that precede them, focusing more on their first occurrence.}
  \label{fig:attention}
\end{figure}

Before considering the multivariate trajectories of agents collected in RL environments, we first want to verify whether a pre-trained LLM model is sensitive to the primitive univariate signals akin to those encountered in them. For this, we investigate the attention mechanism of the Llama3 8B model~\citep{dubey2024llama3herdmodels} when we feed it with different signals, including the periodic \emph{fthigh} dimension from the HalfCheetah system~\citep{Brockman2016}.
By averaging the attention matrices over the 32 heads for each of the 32 layers of the multi-head attention in Llama3, we observed distinct patterns that provide insight into the model's focus and behavior (\cref{fig:attention} shows selected attention layers for each signal). 
The attention matrices exhibit a diagonal pattern, indicative of strong self-correlation among timestamps, and a subtriangular structure due to the causal masked attention in decoder-only transformers.

Further examination of the attention matrices reveals a more intricate finding. 
Tokens within repeating patterns (e.g., signal peaks, constant parts) not only attend to past tokens within the same cycle but also to those from previous occurrences of the same pattern, demonstrating a form of in-context learning. The ability to detect and exploit repeating patterns within such signals is especially valuable in RL, where state transitions and action outcomes often exhibit cyclical or recurring dynamics, particularly in continuous control tasks. However, applying this insight to RL presents two critical challenges related to 1) the integration of actions into the forecasting process, and 2) handling of the multivariate nature of RL problems. 
We now address these challenges by building on the insights from the analysis presented above.

\subsection{Problem setup}
\label{sec:time_series}

Given an initial trajectory $\mathcal{T} = (s_0, a_0, r_1, s_1, a_1, r_2, s_2, \hdots, r_{T-1}, s_{T-1})$ of length $T$, with $s_t \in \mathcal{S}$, $a_t = \pi(s_t) \in \mathcal{A}$\footnote{In practice, states and actions are real valued vectors spanning a space of dimensions respectively $d_s$ and $d_a$: $\mathcal{S}=\R^{d_s}$, $\mathcal{A}=\R^{d_a}$}, where the policy $\pi$ is fixed for the whole trajectory, and $r_t \in \R$, we want to predict future transitions: given $(s_{T-1}, a_{T-1})$ predict the next state and reward $(s_T, r_T)$ and subsequent transitions autoregressively. For simplicity we first omit the actions and the reward, focusing instead on the multivariate sequence $\tau^\pi=(s_0, s_1, \hdots, s_T)$ where we assume that the state dimensions are independent. Later, we show how to relax the assumptions of omitting actions and rewards, as well as state independence, which is crucial for applications in RL. The joint probability density function of $\tau^\pi$ can be written as: 

\begin{equation}
\label{eq:vanilla}
        \begin{cases}
        \mathbf{P}(\tau^\pi) = \mu_0(s_0) \prod_{t=1}^T P^\pi(s_t|s_{t-1}) \\[0.5em]
        \text{where } P^\pi(s_t|s_{t-1}) = \int_{a \in \mathcal{A}} \pi(a | s_{t-1}) P(s_t | s_{t-1}, a) \, da \enspace .
    \end{cases}
\end{equation}

\begin{wrapfigure}[17]{r}{0.4\textwidth}
\vspace{-0.6cm}
  \begin{center}
\includegraphics[width=0.38\textwidth]{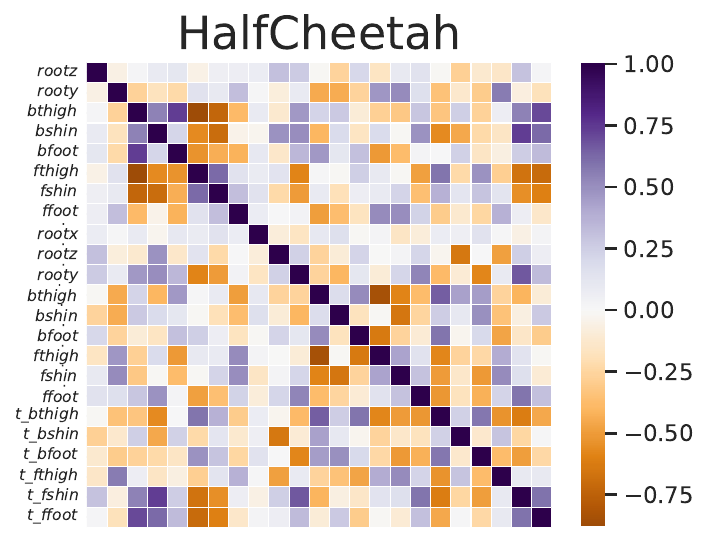}
  \end{center}
  \caption{The covariance matrix from an expert dataset in the Halfcheetah environment indicates linear correlations between state-action features. 
  }
  \label{fig:covariance}
\end{wrapfigure}

Using the decoder-only nature of the in-context learner defined in \cref{sec:background}, we can apply \cref{alg:icl} to each dimension of the state vector to infer the transition rule of each visited state in $\tau^\pi$ conditioned on its relative history: for all $j \in \{1, \ldots, d_s\}$,
\begin{equation}
\label{eq:icl}
\{\hat{P}^{\pi, j}_\theta(s^j_{t}|s^j_{t-1},\hdots,s^j_1,s^j_0)\}_{t \leq T} = \text{ICL}_\theta(\tau^{\pi, j})
\end{equation}
where $\theta$ are the fixed parameters of the LLM used as an in-context learner, and $T$ its context length. Assuming complete observability of the MDP state, the Markovian property unveils an equivalence between the learned transition rules and the corresponding Markovian ones: $\hat{P}_\theta(s_t|s_{t-1},\hdots,s_1,s_0) = \hat{P}_\theta(s_t|s_{t-1})$.

This approach, that we name \vICL (for vanilla ICL), thus applies \cref{alg:icl} on each dimension of the state individually, assuming their independence. Furthermore, the action information is integrated-out (as depicted in \cref{eq:vanilla}), which in theory, limits the application scope of this method to quantities that only depend on a policy through the expectation over actions (e.g., the value function $V^\pi(s)$). We address these limitations in the next section.

\rebuttal{\textbf{On the zero-shot nature of \DICL.} Our use of the term "zero-shot" aligns with the literature on LLMs and time series \citep{gruver_large_2023}, indicating that we do not perform any gradient updates or fine-tuning of the pretrained LLM's weights. Specifically, we adopt the dynamical systems formulation of ICL as studied in \cite{li2023transformers}, where the query consists of the trajectory "\(s^j_0,s^j_1,\ldots,s^j_{t-1}\)" and the label is the subsequent value \(s^j_{t}\).}

\subsection{State and action dimension interdependence}
\label{subsec:dicl}

In this section we address the two limitations of \vICL discussed in \cref{sec:time_series} by introducing Disentangled In-Context Learning (\DICL), a method that relaxes the assumption of state feature independence and reintroduces the action by employing strategies that aim to map the state-action vector to a latent space where the features are independent. 
We can then apply \vICL, which operates under the assumption of feature independence, to the latent representation. 
An added benefit of using such a latent space is that it can potentially reduce the dimensionality, leading to a speed-up of the overall approach.

While sophisticated approaches\footnote{A more detailed discussion of alternative approaches to PCA is provided in \cref{appendix:dimensions}.} like disentangled autoencoders could be considered for \DICL, in this work we employ Principal Component Analysis (PCA).
In fact, the absence of pre-trained models for this type of representation learning requires training from scratch on a potentially large dataset. 
This goes against our goal of leveraging the pre-trained knowledge of LLMs and ICL. 
Instead, we find that PCA, which generates new linearly uncorrelated features and can reduce dimensionality, strikes a good balance between simplicity, tractability, and performance (\cref{fig:covariance} and \cref{fig:main}). 
Nonetheless, \DICL is agnostic to this aspect and any transformation $\varphi$ that can disentangle features can be used in place of PCA.
\rebuttal{
In the rest of the paper we present two variants of DICL:
\begin{itemize}
    \item \saPCA, which applies the rotation matrix of PCA to the feature space of states and actions and then runs \cref{alg:icl} in the projection space of principal components;
    \item \sPCA, which applies the same transformation solely to the trajectory of states. This is useful in settings in which integrating the actions is not necessary, as when we only want to estimate the value function $V^\pi(s)$.
\end{itemize}
}

\subsection{An illustrative example} 

\begin{figure}[t]
     \centering
     \begin{subfigure}[b]{0.26\columnwidth}
         \centering
         \includegraphics[width=\textwidth]{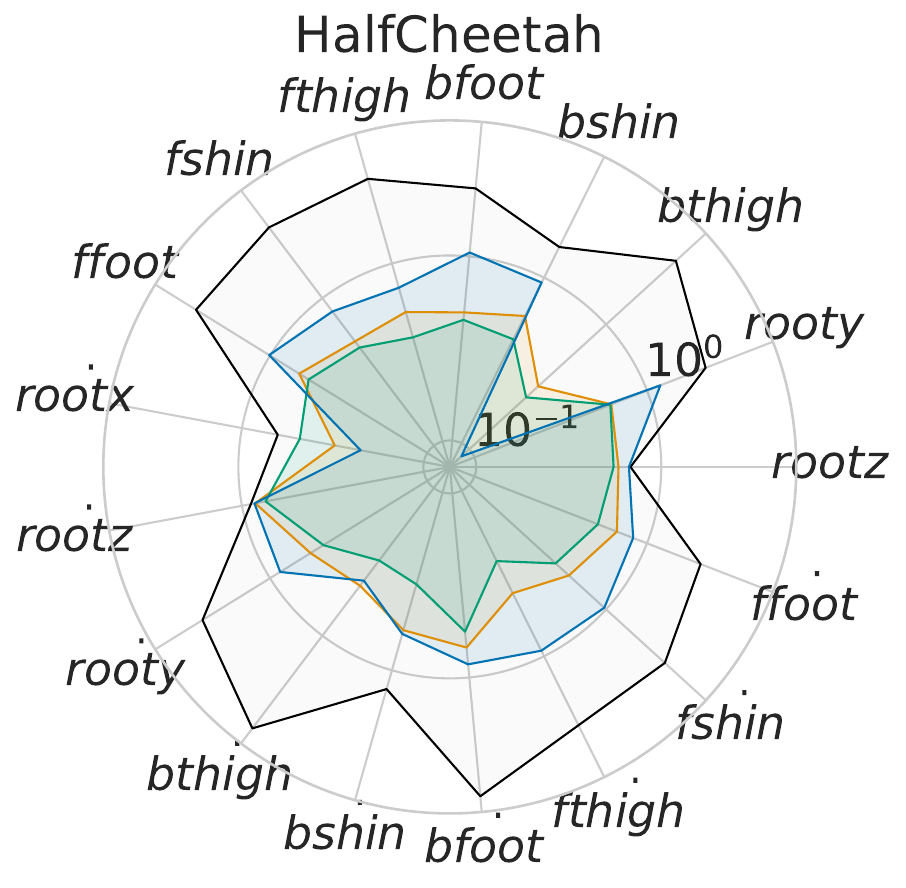}
         \caption{\textbf{Multi-step error.}}
        \label{fig:radar}
     \end{subfigure}
     \begin{subfigure}[b]{0.55\columnwidth}
         \centering
         \includegraphics[width=\textwidth]{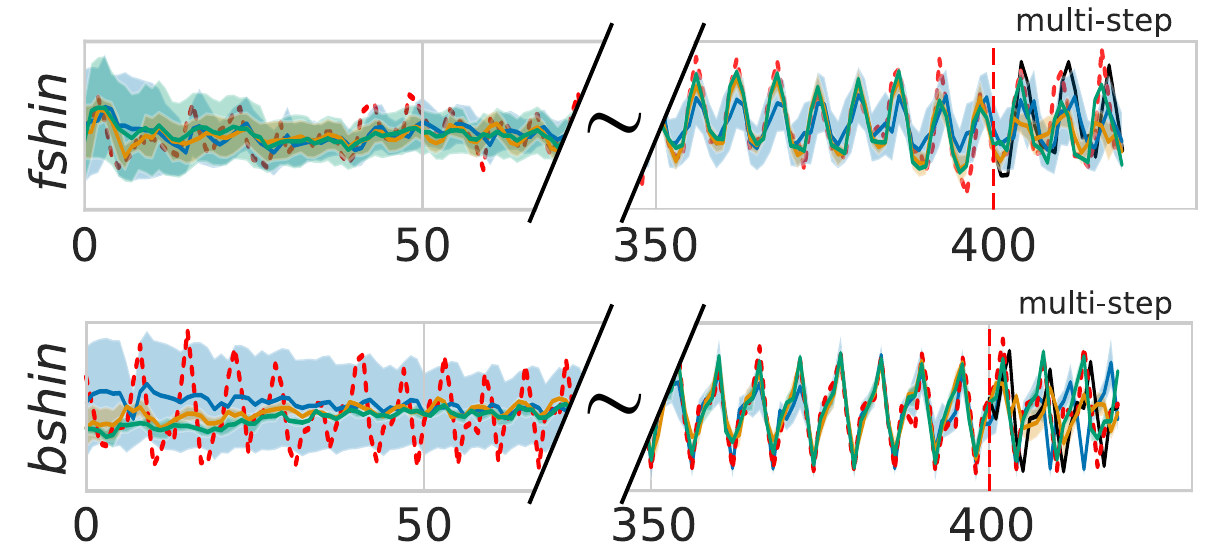}
         \caption{\textbf{Predicted trajectories.}}
        \label{fig:multi_step}
     \end{subfigure}
     \begin{subfigure}[b]{0.17\columnwidth}
         \centering
         \includegraphics[width=\textwidth]{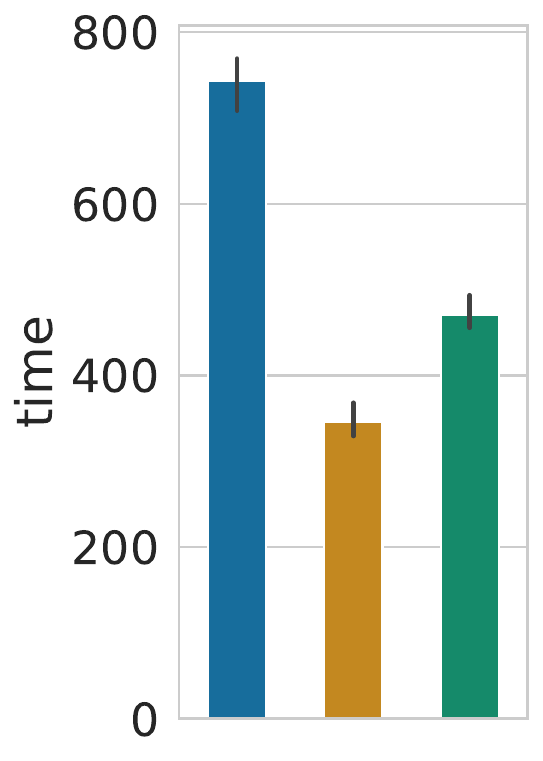}
         \caption{\textbf{Time.}}
        \label{fig:time}
     \end{subfigure}
     \vfill
      \begin{subfigure}[b]{0.7\columnwidth}
         \centering
         \includegraphics[width=\textwidth]{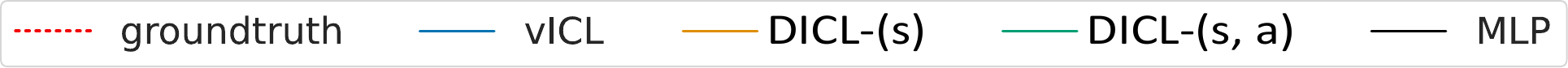}
     \end{subfigure}
    \caption{\textbf{PCA-based \DICL achieves smaller multi-step error in less computational time.} We compare \sPCA and \saPCA using a number of components equal to half the number of features, with the vanilla approach \vICL and an MLP baseline. \rebuttal{(\textit{Llama 3-8B})}.}
    \label{fig:main}
\end{figure}

In this section, we aim to challenge our approach against the HalfCheetah system from the MuJoCo Gym environment suite~\citep{Brockman2016, Todorov2012}. 
All our experiments are conducted using the Llama 3 series of models \citep{dubey2024llama3herdmodels}. 
\cref{fig:radar} shows the average MSE over a prediction horizon of $h \in \{1,\hdots,20\}$ steps for each state dimension. \cref{fig:multi_step} shows predicted trajectories for selected state dimensions of the HalfCheetah system (the details of the experiment, the metrics and the remaining state dimensions are \rebuttal{deferred} to \cref{appendix:errors}). 

We first observe that the LLM-based dynamics forecasters exhibit a burn-in phase ($\approx 70$ steps in \cref{fig:multi_step}) that is necessary for the LLM to gather enough context. For multi-step prediction, \cref{fig:radar}, showing the average MSE over prediction horizons and trajectories, demonstrates that both versions of \DICL improve over the vanilla approach and the MLP baseline trained on the context data, in almost all state dimensions.
Indeed, we hypothesize that this improvement is especially brought by the projection in a linearly uncorrelated space that PCA enables. 
Furthermore, we also leveraged the dimensionality reduction feature by selecting a number of components $c$ equal to half the number of the original features $d_s + d_a$ (or $d_s$ in \sPCA). 
This results in a significant decrease in the computational time of the method without loss of performance, as showcased by \cref{fig:time}.


\begin{wraptable}[31]{r}{0.4\textwidth}
    \vskip -0.12in
    \centering
    \begin{tabular}[t]{lcc}
        \toprule
        \multirow{2}{*}{\textbf{LLaMA}} & \multicolumn{2}{c}{\textbf{Metrics}} \\
        \cmidrule(lr){2-3}
         & \textbf{MSE}$/ 10^{-2}$$\downarrow$ & \textbf{KS}$/ 10^{-2}$$\downarrow$ \\
        \midrule
        \rowcolor{blueseaborn!20}
        \multicolumn{3}{l}{\textbf{vICL}} \\
        3.2-1B & 384 $\pm$ 31 & 52 $\pm$ 7 \\
        3.2-3B & 399 $\pm$ 40 & 54 $\pm$ 8 \\
        3.1-8B & 380 $\pm$ 32 & 53 $\pm$ 7 \\
        3-8B & 375 $\pm$ 30 & 53 $\pm$ 7 \\
        3.1-70B & 392 $\pm$ 35 & 55 $\pm$ 7 \\
        \rowcolor{orangeseaborn!20}
        \multicolumn{3}{l}{\textbf{DICL-$(s)$}} \\
        3.2-1B & 389 $\pm$ 38 & 50 $\pm$ 7 \\
        3.2-3B & 404 $\pm$ 41 & 51 $\pm$ 7 \\
        3.1-8B & 372 $\pm$ 44 & 50 $\pm$ 7 \\
        3-8B & 370 $\pm$ 36 & 50 $\pm$ 7 \\
        3.1-70B & \textbf{359 $\pm$ 33} & 54 $\pm$ 7 \\
        \rowcolor{greenseaborn!20}
        \multicolumn{3}{l}{\textbf{DICL-$(s,a)$}} \\
        3.2-1B & 449 $\pm$ 37 & 46 $\pm$ 5 \\
        3.2-3B & 450 $\pm$ 47 & 48 $\pm$ 6 \\
        3.1-8B & 412 $\pm$ 39 & \textbf{45 $\pm$ 6} \\
        3-8B & 418 $\pm$ 46 & 46 $\pm$ 5 \\
        3.1-70B & 428 $\pm$ 47 & 47 $\pm$ 5 \\
        \rowcolor{black!20}
        \multicolumn{3}{l}{\textbf{baseline}} \\
        MLP & 406 $\pm$ 59 & 55 $\pm$ 3 \\
        \bottomrule
    \end{tabular}
    \caption{\rebuttal{Comparison of different LLMs. Results are average over 5 episodes from each one of 7 D4RL \citep{Fu2021B} tasks. $\downarrow$ means lower the better. The best average score is shown in \textbf{bold}. We show the average score $\pm$ the 95\% Gaussian confidence interval.}}
    \label{table:llm}
\end{wraptable}

\rebuttal{\textbf{LLMs comparison.} In \cref{table:llm} we compare the performance obtained by the baselines and DICL when using different LLMs. Similarly to \cref{fig:radar}, the scores are calculated as the average over a given prediction horizon $h$ across all dimensions (refer to \cref{appendix:errors} for details on the MSE, and \cref{appendix:calibration} for details on the KS statistic).
Note that similarly to \cref{fig:main}, we use PCA-based dimensionality reduction for both \saPCA and \sPCA in this experiment, \emph{reducing the original number of features by half}.
Overall, we can see that DICL, especially the \saPCA version, demonstrates improved calibration compared to both \vICL and the MLP baselines, thanks to the disentangling effect of PCA.
Moreover, \sPCA with the 3.1-70B model achieves the lowest Mean Squared Error (MSE) of 3.59.
Nonetheless, \saPCA exhibits the highest MSE across all models. This is likely due to the additional error introduced by predicting action information, thereby modeling both the dynamics and the data-generating policy. This aspect differs from the MLP baseline, which is provided with real actions at test time (acting as an oracle), and from \sPCA and \vICL, which operate solely on states. We show the detailed results of this ablation study in \cref{appendix:llm}. Notice that we exclusively used LLMs based on the LLaMA series of models \citep{dubey2024llama3herdmodels}. This was a strategic choice due to the LLaMA tokenizer, which facilitates our framework by assigning a separate token to each number between 0 and 999. For other LLMs, algorithms have been suggested in the literature to extract transition rules from their output logits. For example, the Hierarchical Softmax algorithm \citep{liu2024llms} could be employed for this purpose.
} 

\section{Use-cases in Reinforcement Learning}
\label{sec:RL}

As explored in the preceding sections, LLMs can be used as accurate dynamics learners for proprioceptive control through in-context learning. 
We now state our main contributions in terms of the integration of \DICL into MBRL. 
First, we generalize the return bound of Model-Based Policy Optimization (MBPO)~\citep{JanneR2019} to the more general case of multiple branches and use it to analyze our method. 
Next, we leverage the LLM to augment the replay buffer of an off-policy RL algorithm, leading to a more sample-efficient algorithm. 
In a second application, we apply our method to predict the reward signal, resulting in a hybrid model-based policy evaluation technique. 
Finally, we show that the LLM provides calibrated uncertainty estimates and conclude with a discussion of our results.

\subsection{Theoretical analysis: Return bound under multi-branch rollouts}
\label{subsec:theory}

When using a dynamics model in MBRL, one ideally seeks monotonic improvement guarantees, ensuring that the optimal policy under the model is also optimal under the true dynamics, up to some bound. Such guarantees generally depend on system parameters (e.g., the discount factor \(\gamma\)), the prediction horizon \(k\), and the model generalization error $\varepsilon_\text{m}$. As established in~\citet{JanneR2019} and \citet{ frauenknecht2024trustmodeltrusts}, the framework for deriving these theoretical guarantees is the one of branched model-based rollouts.

A branched rollout return $\eta^{\text{branch}}[\pi]$ of a policy $\pi$ is defined in \cite{JanneR2019} as the return of a rollout which begins under the true dynamics $P$ and at some point in time switches to rolling out under learned dynamics $\hat{P}$ for $k$ steps. 

\begin{wrapfigure}[16]{r}{0.4\textwidth}
  \begin{center}
\includegraphics[width=0.38\textwidth]{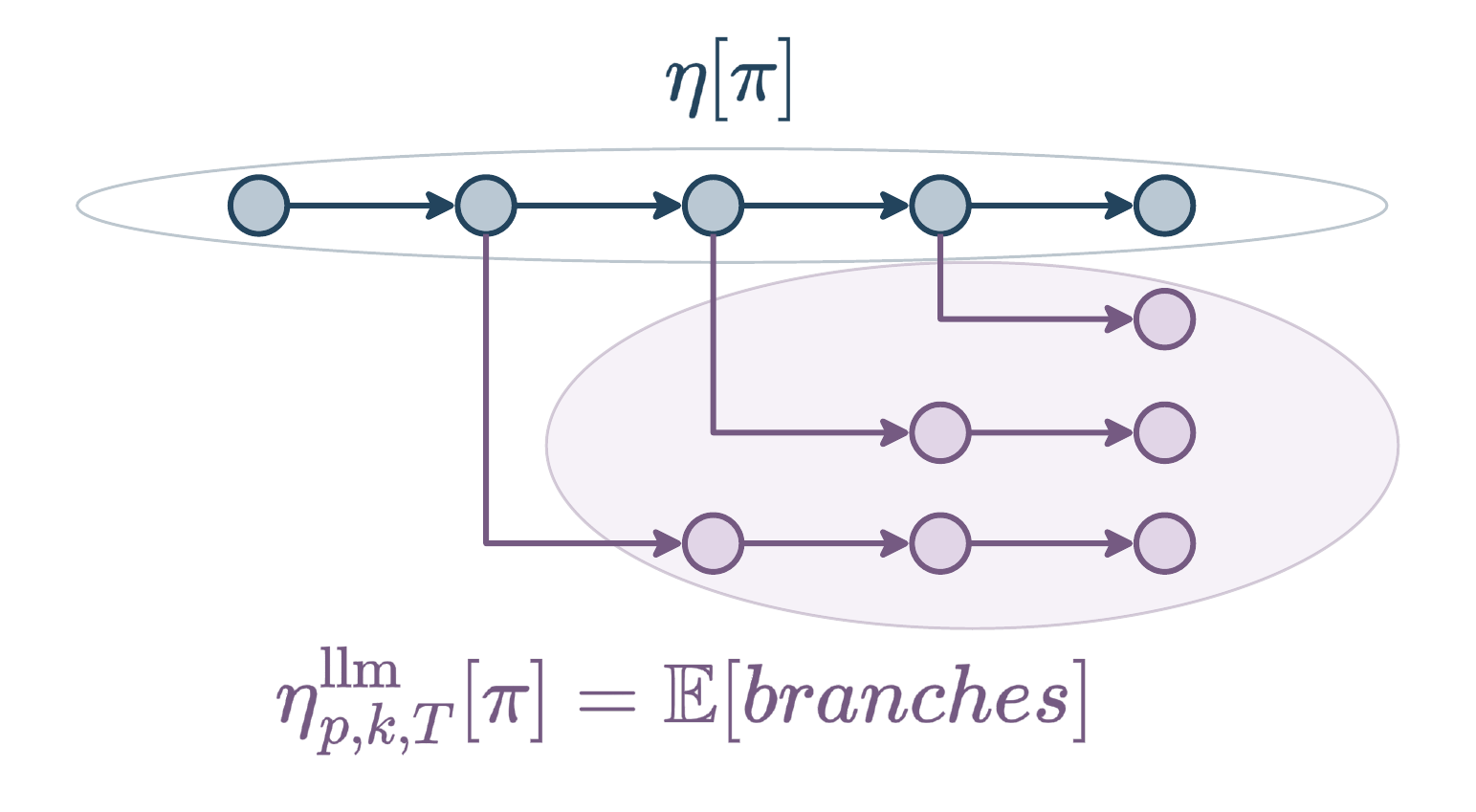}
  \end{center}
  \caption{\textbf{Multi-branch return.} The rollout following the true dynamics $P$ is shown in blue. The branched rollouts following LLM-based dynamics $\hat{P}_{\text{llm}}$ are in purple. Branched rollouts can overlap, with the expectation over the overlapping branches as the return.}
  \label{fig:multibranch}
\end{wrapfigure}

For our LLM-based dynamics learner, we are interested in studying a more general branching scheme that will be later used to analyze the results of our data-augmented off-policy algorithm. We begin by defining the multi-branch rollout return.

\begin{definition}[Multi-branch rollout return]
The multi-branch rollout return $\eta^{\text{llm}}_{p,k,T}[\pi]$ of a policy $\pi$ is defined as the expected return over rollouts with the following dynamics:
\begin{enumerate}
    \item for $t < T$, where $T$ is the minimal context length, the rollout follows the true dynamics $P$.
    \item for $t \ge T$, with probability $p$, the rollout switches to the LLM-based dynamics $\hat{P}_{\text{llm}}$ for $k$ steps, otherwise the rollout continues with the true dynamics $P$.
\end{enumerate}
These different rollout realizations, referred to as branches, can overlap, meaning that multiple LLM-based dynamics can run in parallel if multiple branchings from the true dynamics occur within the $k$-step window (see Fig.~\ref{fig:multibranch}).
\end{definition}

With this definition, we now state our main theoretical result, consisting of a return bound between the true return and the multi-branch rollout return.

\begin{theorem}[Multi-branch return bound]\label{theorem:main}
Let $T$ be the minimal length of the in-context trajectories, $p \in [0, 1]$ the probability that a given state is a branching point. We assume that the reward is bounded and that the expected total variation between the LLM-based model and the true dynamics under a policy $\pi$ is bounded at each timestep by $\max_{t \geq T} \mathbb{E}_{s\sim P^t, a \sim \pi}[D_{\text{TV}}(P(.|s,a)||\hat{P}_{\text{llm}}(.|s,a))] \leq \varepsilon_{\text{llm}}(T)$. Then under a multi-branched rollout scheme with a branch length of $k$, the return is bounded as follows:
\begin{equation}
    |\eta(\pi) - \eta^{\text{llm}}_{p,k,T}(\pi)| \leq  2 \frac{\gamma^{T}}{1-\gamma} r_{\max} k^2 \, p \, \varepsilon_\text{llm}(T) \enspace ,
\end{equation}
where $r_{\max} = \max_{s \in \mathcal{S}, a \in \mathcal{A}} r(s, a)$.
\end{theorem}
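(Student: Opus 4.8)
The plan is to expand both returns as discounted sums of per-timestep expected rewards and to control the discrepancy timestep-by-timestep through the induced state-occupancy measures. Writing $\mu_t$ and $\mu_t^{\text{llm}}$ for the state-action distributions at time $t$ under the true dynamics $P$ and under the multi-branch scheme respectively, I would first use the triangle inequality to get $|\eta(\pi) - \eta^{\text{llm}}_{p,k,T}(\pi)| \le \sum_{t=0}^{\infty} \gamma^t \, |\mathbb{E}_{\mu_t}[r] - \mathbb{E}_{\mu_t^{\text{llm}}}[r]|$, and then bound each summand by $2 r_{\max}\, D_{\text{TV}}(\mu_t, \mu_t^{\text{llm}})$ via the standard inequality $|\mathbb{E}_P f - \mathbb{E}_Q f| \le \|f\|_\infty \cdot 2 D_{\text{TV}}(P,Q)$ with $\|r\|_\infty \le r_{\max}$. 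Because both schemes run the true dynamics for $t < T$ and share the same policy $\pi$ (so the action conditional never contributes error), these terms vanish for $t < T$; this is what will eventually produce the $\gamma^{T}$ prefactor.

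The core step is bounding $D_{\text{TV}}(\mu_t, \mu_t^{\text{llm}})$ for $t \ge T$ using the structure of the branching scheme. The state at time $t$ coincides with the true-dynamics state unless it lies inside an active branch, i.e.\ a branch whose start time $s$ satisfies $t-k \le s \le t-1$; there are at most $k$ such start times, each triggered independently with probability $p$. By a hybrid argument that switches on one branch at a time, the total-variation distance of the mixture $\mu_t^{\text{llm}}$ from $\mu_t$ is subadditive over these branches: $D_{\text{TV}}(\mu_t, \mu_t^{\text{llm}}) \le \sum_{j=1}^{k} p\,\delta_j$, where $\delta_j$ is the deviation contributed by a branch that has run $j$ steps under $\hat{P}_{\text{llm}}$ (the branch started at $s = t-j$). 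A branch starting from a true state and then composing $j$ LLM transitions incurs $\delta_j \le j\,\varepsilon_{\text{llm}}(T)$: this follows from the triangle inequality together with the non-expansiveness of total variation under any Markov kernel, so each of the $j$ substituted transitions adds at most $\varepsilon_{\text{llm}}(T)$.

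Combining these pieces gives $D_{\text{TV}}(\mu_t, \mu_t^{\text{llm}}) \le p\,\varepsilon_{\text{llm}}(T) \sum_{j=1}^{k} j = p\,\varepsilon_{\text{llm}}(T)\,\tfrac{k(k+1)}{2} \le p\,\varepsilon_{\text{llm}}(T)\,k^2$, so that
\[
|\eta(\pi) - \eta^{\text{llm}}_{p,k,T}(\pi)| \;\le\; \sum_{t \ge T} \gamma^t \, 2 r_{\max}\, p\, k^2\, \varepsilon_{\text{llm}}(T) \;=\; 2\,\frac{\gamma^{T}}{1-\gamma}\, r_{\max}\, k^2\, p\, \varepsilon_{\text{llm}}(T),
\]
using $\sum_{t \ge T}\gamma^t = \gamma^{T}/(1-\gamma)$. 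This is exactly the claimed bound, with one factor of $k$ arising from the linear-in-step error growth inside a branch and the other from the up-to-$k$ branches that can be simultaneously active at a given timestep.

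The step I expect to be the main obstacle is the subadditivity of total variation across overlapping branches, i.e.\ justifying that replacing the true occupancy by the mixture over all branch configurations perturbs $\mu_t^{\text{llm}}$ by no more than the sum of the individual per-branch perturbations. Making this rigorous requires the independence of the branch-start coin flips and convexity of $D_{\text{TV}}$ in its arguments, and checking that nested branchings (a branch starting inside another) contribute only at order $p^2$ and are absorbed. A secondary delicate point is the within-branch estimate $\delta_j \le j\,\varepsilon_{\text{llm}}(T)$: the telescoping applies the per-step bound along a rollout whose occupancy has already drifted away from the true $P^{t}$ under which $\varepsilon_{\text{llm}}(T)$ is defined, so one must either interpret the assumption as a per-step bound valid along the branched trajectory or absorb the drift into the error term. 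The remaining ingredients—the reward-to-TV inequality, the kernel non-expansiveness, and the geometric summation—are standard.
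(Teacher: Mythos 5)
Your proposal is correct and follows essentially the same route as the paper's proof: cancellation of the burn-in phase to produce the $\gamma^T$ factor, a $2r_{\max}\,D_{\text{TV}}$ bound on per-timestep reward differences, linear-in-branch-age accumulation of one-step model errors (giving one factor of $k$), a probability factor of $p$ for each of the $k$ possible branch starts (giving the other factor of $k$ and the $p$), and the geometric tail sum; your $\sum_{j=1}^k p\, j\,\varepsilon_{\text{llm}}(T) = p\,\varepsilon_{\text{llm}}(T)\,k(k+1)/2$ assembly is even marginally tighter than the paper's $P(A_t^k)\cdot k\,\varepsilon_{\text{llm}}(T) \le kp \cdot k\,\varepsilon_{\text{llm}}(T)$ before both are relaxed to $k^2 p\,\varepsilon_{\text{llm}}(T)$. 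The two obstacles you flag dissolve under the paper's formalization: branches always root at the true-dynamics trajectory (never inside other branches) and simultaneously active branches are averaged, so your subadditivity follows directly from convexity of total variation with no $p^2$ nesting terms to absorb; and the paper's multi-step error lemma telescopes asymmetrically so that each one-step kernel error is weighted by the \emph{true} state marginal rather than the drifted one, exactly matching the form of the assumption $\max_{t \geq T} \mathbb{E}_{s\sim P^t, a \sim \pi}[D_{\text{TV}}(P(\cdot|s,a)\,\|\,\hat{P}_{\text{llm}}(\cdot|s,a))] \leq \varepsilon_{\text{llm}}(T)$.
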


\cref{theorem:main} generalizes the single-branch return presented in \cite{JanneR2019}, incorporating an additional factor of the prediction horizon $k$ due to the presence of multiple branches, and directly accounting for the impact of the amount of LLM training data through the branching factor $p$. Additionally, the bound is inversely proportional to the minimal context length $T$, both through the power in the discount factor $\gamma^{T}$ and the error term $\varepsilon_{\text{llm}}(T)$. Indeed, the term $\varepsilon_{\text{llm}}(T)$ corresponds to the generalization error of in-context learning. Several works in the literature studied it and showed that it typically decreases in $\mathcal{O}(T^{-1/2})$ with $T$ the length of the context trajectories~\citep{zekri2024largelanguagemodelsmarkov, zhang2023doesincontextlearninglearn, li2023transformers}.

\subsection{Data-augmented off-policy Reinforcement Learning}
\label{subsec:dataug}

In this section, we show how \DICL can be used for data augmentation in off-policy model-free RL algorithms such as Soft Actor-Critic (SAC)~\citep{Haarnoja2018}.
The idea is to augment the replay buffer of the off-policy algorithm with transitions generated by \DICL, using trajectories already collected by previous policies. 
The goal is \rebuttal{to} improve sample-efficiency and accelerate the learning curve, particularly in the early stages of learning as the LLM can generate accurate transitions from a small trajectory.
We name this application of our approach DICL-SAC.

As defined in \citet{corrado2023understanding}, data-augmented off-policy RL involves perturbing previously observed transitions to generate new transitions, without further interaction with the environment. The generated transitions should ideally be diverse and feasible under the MDP dynamics to enhance sample efficiency while ensuring that the optimal policy remains learnable.

\begin{wrapfigure}[23]{r}{0.6\textwidth} 
 \vspace{-0.4cm}
    \begin{minipage}{0.6\textwidth}
        \begin{algorithm}[H]
        \caption{\textcolor{darkcerulean}{DICL-SAC}}
        \label{alg:main}
        \begin{algorithmic}[1]
        \State \textbf{Inputs:} LLM-based dynamics learner (e.g. \sPCA), batch size $b$, LLM data proportion \textcolor{darkcerulean}{$\alpha$}, minimal context length \textcolor{darkcerulean}{$T$}, and maximal context length \textcolor{darkcerulean}{$T_{\max}$}
        \State \textbf{Initialize} policy $\pi_\phi$, critic $Q_\psi$, replay buffer $\mathcal{R}$, and LLM replay buffer \textcolor{darkcerulean}{$\mathcal{R}_{\text{llm}}$}, and context size \textcolor{darkcerulean}{$T_{\max}$} 
        \For{$t = 1,\dots,N\_interactions$}
            \State New transition $(s_t,a_t,r_t,s_{t+1})$ from $\pi_\theta$ 
            \State Add $(s_t,a_t,r_t,s_{t+1})$ to $\mathcal{R}$ 
            \State Store auxiliary action $\Tilde{a}_t \sim \pi_\theta(.|s_t)$ 
            \If {\textcolor{darkcerulean}{Generate LLM data}}
                \State \textcolor{darkcerulean}{Sample trajectory $\mathcal{T}=(s_0,\hdots,s_{T_{\max}})$ from $\mathcal{R}$} 
                \State \textcolor{darkcerulean}{$\{ \hat{s}_{i+1} \}_{0 \leq i \leq T_{\max}} \sim$} \sPCA \textcolor{darkcerulean}{$(\mathcal{T})$} 
                \State \textcolor{darkcerulean}{Add $\{(s_{i},\Tilde{a}_{i},r_{i},\hat{s}_{i+1})\}_{T \leq i \leq T_{\max}}$ to $\mathcal{R}_{\text{llm}}$}
            \EndIf
            \If {update SAC}
                \State Sample batch $\mathcal{B}$ of size $b$ from $\mathcal{R}$
                \State Sample batch \textcolor{darkcerulean}{$\mathcal{B}_{\text{llm}}$} of size \textcolor{darkcerulean}{$\alpha$} $\cdot b$ from \textcolor{darkcerulean}{$\mathcal{R}_{\text{llm}}$} 
                \State Update $\phi$ and $\psi$ on $\mathcal{B} \; \cup$ \textcolor{darkcerulean}{$\mathcal{B}_{\text{llm}}$} 
            \EndIf
        \EndFor
        \end{algorithmic}
        \end{algorithm}
    \end{minipage}
\end{wrapfigure}

\cref{alg:main} (DICL-SAC) integrates multiple components to demonstrate a novel proof-of-concept for improving the sample efficiency of SAC using \DICL for data augmentation. Let
$\mathcal{T} = (s_0, a_0, r_0, \ldots, s_{T_{\max}}, a_{T_{\max}}, r_{T_{\max}})$ be a real trajectory collected with a fixed policy $\pi_\phi$, sampled from the real transitions being stored in a replay buffer $\mathcal{R}$. 
We generate synthetic transitions $(s_t, \Tilde{a}_t, r_t, \hat{s}_{t+1})_{T \leq t \leq T_{\max}}$; where $\hat{s}_{t+1}$ is the next state generated by the LLM model applied on the trajectory of the states only, $\Tilde{a}_t$ is an action sampled from the data collection policy $\pi_{\phi}(.|s_t)$, and $T$ is the minimal context length. 
These transitions are then stored in a separate replay buffer $\mathcal{R}_{\text{llm}}$.
At a given update frequency, DICL-SAC performs $G$ gradient updates using data sampled from $\mathcal{R}$ and $\alpha \% \cdot G$ gradient updates using data sampled from $\mathcal{R}_{\text{llm}}$. 
Other hyperparameters of our method include the LLM-based method (\vICL, \sPCA or \saPCA), how often we generate new LLM data and the maximal context length $T_{\max}$ (see \cref{appendix:algo} for the full list of hyperparameters).

\begin{figure}[t]
     \centering
     \begin{subfigure}[b]{0.33\columnwidth}
         \centering
         \includegraphics[width=\textwidth]{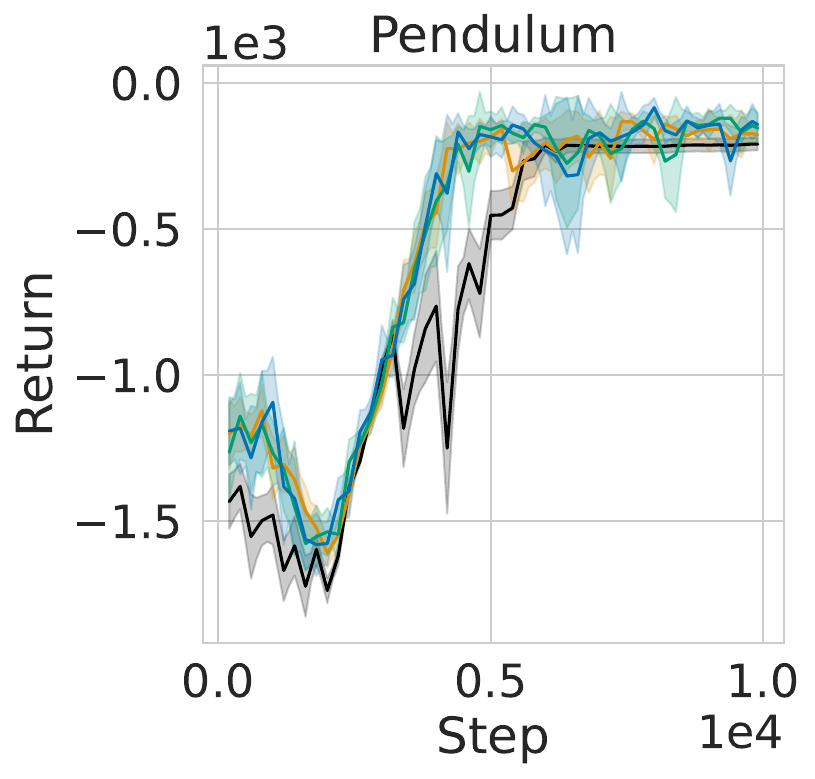}
     \end{subfigure}
     \begin{subfigure}[b]{0.30\columnwidth}
         \centering
         \includegraphics[width=\textwidth]{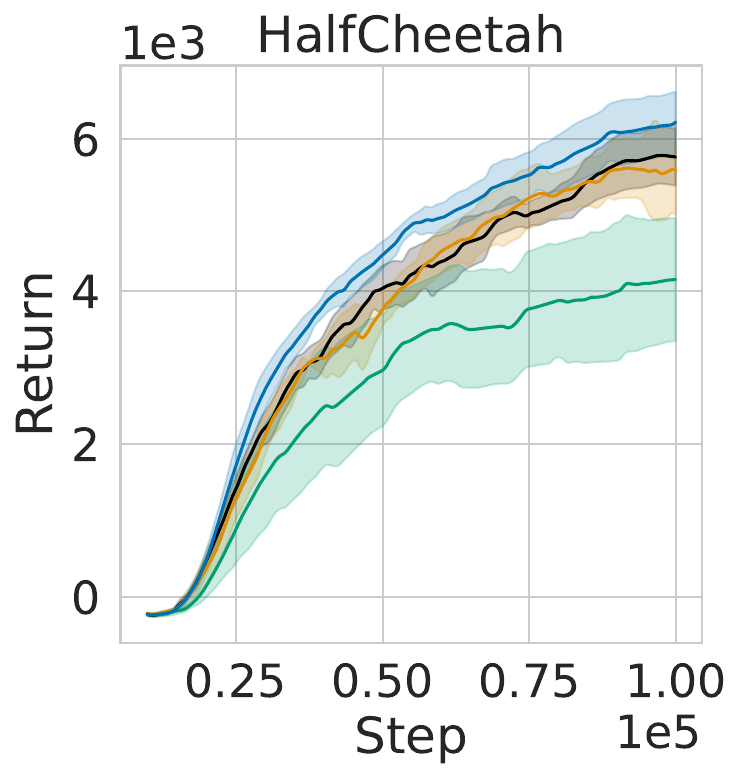}
     \end{subfigure}
     \begin{subfigure}[b]{0.32\columnwidth}
         \centering
         \includegraphics[width=\textwidth]{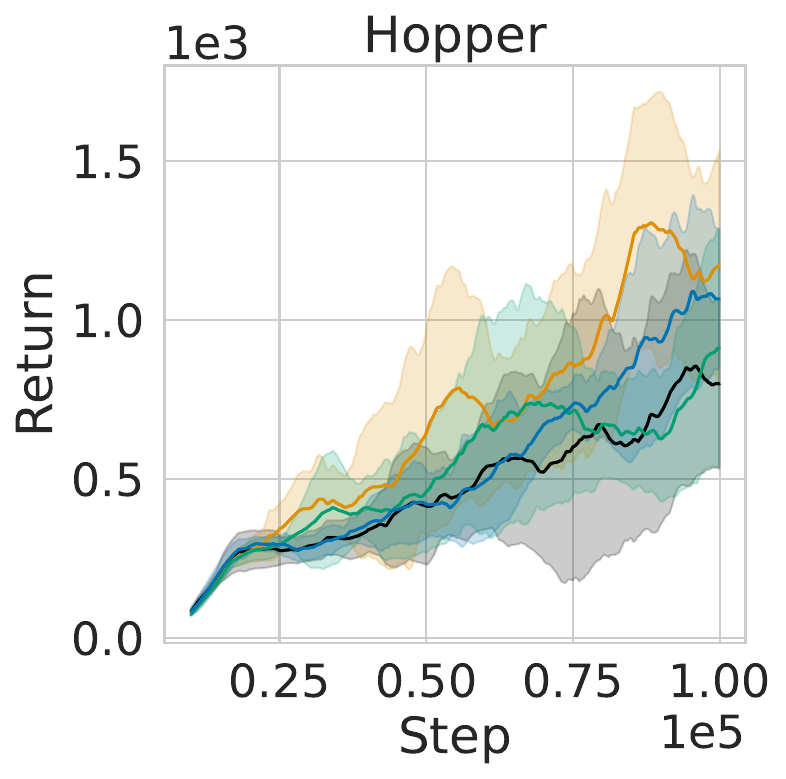}
     \end{subfigure}
     \vfill
     \begin{subfigure}[b]{0.7\columnwidth}
         \centering
         \includegraphics[width=\textwidth]{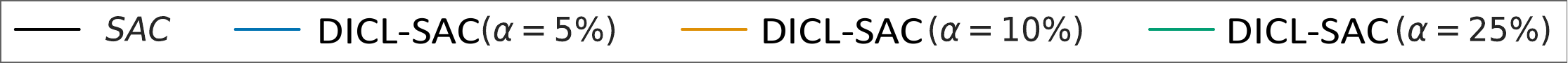}
     \end{subfigure}
    \caption{\textbf{Data-augmented off-policy RL.} In the early stages of training DICL-SAC improves the sample efficiency of SAC on three Gym control environments. \rebuttal{Due to the intensive use of the LLM within DICL-SAC, we conducted this experiment using the \textit{Llama 3.2-1B} model.}}
    \label{fig:return}
\end{figure}

\cref{fig:return} compares the return curves obtained by DICL-SAC against SAC in three control environments from the Gym library~\citep{Brockman2016}. As anticipated with our data augmentation approach, we observe that our algorithm improves the sample efficiency of SAC at the beginning of training. This improvement is moderate but significant in the Pendulum and HalfCheetah environments, while the return curves tend to be noisier in the Hopper environment. Furthermore, as the proportion of LLM data $\alpha$ increases, the performance of the algorithm decreases (particularly in HalfCheetah), as predicted by \cref{theorem:main}. Indeed, a larger proportion of LLM data correlates with a higher probability of branching $p$, as more branching points will be sampled throughout the training. Regarding the other parameters of our bound in \cref{theorem:main}, we set $T=1$, meaning all LLM-generated transitions are added to $\mathcal{R}_{\text{llm}}$, and $k=1$ to minimize LLM inference time.

\subsection{Policy Evaluation}
\label{subsec:mbpe}

In this section we show how \DICL can be used for policy evaluation. 

System engineers are often presented with several policies to test on their systems. On the one hand, off-policy evaluation (e.g., \citet{Uehara2022}) involves using historical data collected from a different policy to estimate the performance of a target policy without disrupting the system. However, this approach is prone to issues such as distributional shift and high variance. On the other hand, online evaluation provides a direct and unbiased comparison under real conditions. System engineers often prefer online evaluation for a set of pre-selected policies because it offers real-time feedback and ensures that deployment decisions are based on live data, closely reflecting the system’s true performance in production. However, online evaluations can be time-consuming and may temporarily impact system performance. To address this, we propose a hybrid approach using LLM dynamics predictions obtained through ICL to reduce the time required for online evaluation: the initial phase of policy evaluation is conducted as a standard online test, while the remainder is completed offline using the dynamics predictions enabled by the LLM's ICL capabilities.

\begin{wrapfigure}[16]{r}{0.5\textwidth}
  \begin{center}
\includegraphics[width=0.48\textwidth]{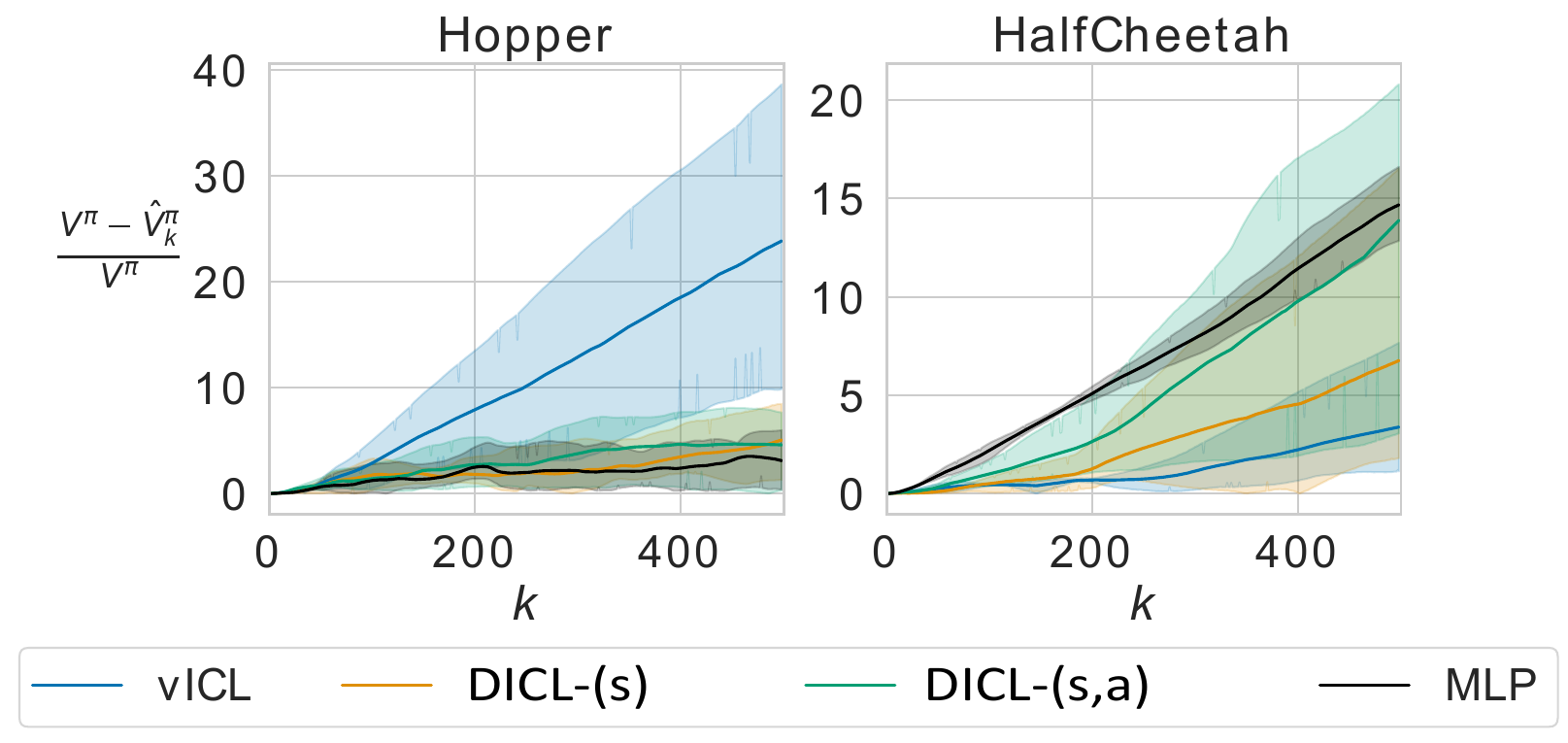}
  \end{center}
  \caption{\textbf{Policy evaluation with \DICL.} Relative error on the predicted value over \( k = 500 \) steps, with context length of \( T = 500 \). \rebuttal{This experiment is conducted using the \textit{Llama 3-8B} model.}}
  \label{fig:ope}
\end{wrapfigure}

\cref{fig:ope} illustrates the relative error in value obtained by predicting the trajectory of rewards for $k$ steps, given a context length of \( T = 500 \). When $k \leq 500$, we complete the remaining steps of the 1000-step episode using the actual rewards.
For the two versions of \DICL, the reward vector is concatenated to the feature space prior to applying PCA.
In the Hopper environment, it is evident that predicting the reward trajectory alone is a challenging task for the vanilla method \vICL. 
On the contrary, both \sPCA and \saPCA effectively capture some of the dependencies of the reward signal on the states and actions, providing a more robust method for policy evaluation, and matching the MLP baseline that has been trained on a dataset of transitions sampled from the same policy. 
However, in HalfCheetah we observe that the vanilla method largely improves upon both the baseline and \DICL. 
We suspect that this is due to the fact that the reward signal is strongly correlated with the $\dot{rootx}$ dimension in HalfCheetah, which proved to be harder to predict by our approach, as can be seen in \cref{fig:radar}.

Note that the experimental setup that we follow here is closely related to the concept of Model-based Value Expansion~\citep{feinberg2018modelbasedvalueestimationefficient, buckman2018steve}, where we use the dynamics model to improve the value estimates through an n-step expansion in an Actor Critic algorithm.

\subsection{Calibration of the LLM uncertainty estimates}

An intriguing property observed in \cref{fig:multi_step} is the confidence interval around the predictions. As detailed in \cref{alg:icl}, one can extract a full probability distribution for the next prediction given the context, enabling uncertainty estimation in the LLM's predictions. Notably, this uncertainty is pronounced at the beginning when context is limited, around peaks, and in regions where the average prediction exhibits large errors. We explore this phenomenon further in the next section by evaluating the calibration of the LLM's uncertainty estimates.

Calibration is known to be an important property of a dynamics model when used in reinforcement learning \citep{Malik2019}. In this section, we aim to investigate whether the uncertainty estimates derived from the LLM's logits are well-calibrated. We achieve this by evaluating the quantile calibration~\citep{kuleshov18a} of the probability distributions obtained for each LLM-based method.

\paragraph{Quantile calibration.} For a regression problem with variable $y \in \mathcal{Y} = \mathbb{R}$, and a model that outputs a cumulative distribution function (CDF) $F_i$ over $y_i$ (where $i$ indexes data points), quantile calibration implies that $y_i$ (groundtruth) should fall within a $p \%$-confidence interval $p \%$ of the time:

\begin{equation}
\frac{\sum_{i=1}^{N} \mathbb{I}\{y_i \leq F_i^{-1}(p)\}}{N} \to p \hspace{0.3cm} \text{for all} \hspace{0.3cm} p \in [0,1] \hspace{0.3cm} \text{as} \hspace{0.3cm} N \to \infty
\end{equation}

\begin{wrapfigure}[18]{r}{0.5\textwidth}
  \begin{center}
\includegraphics[width=0.48\textwidth]{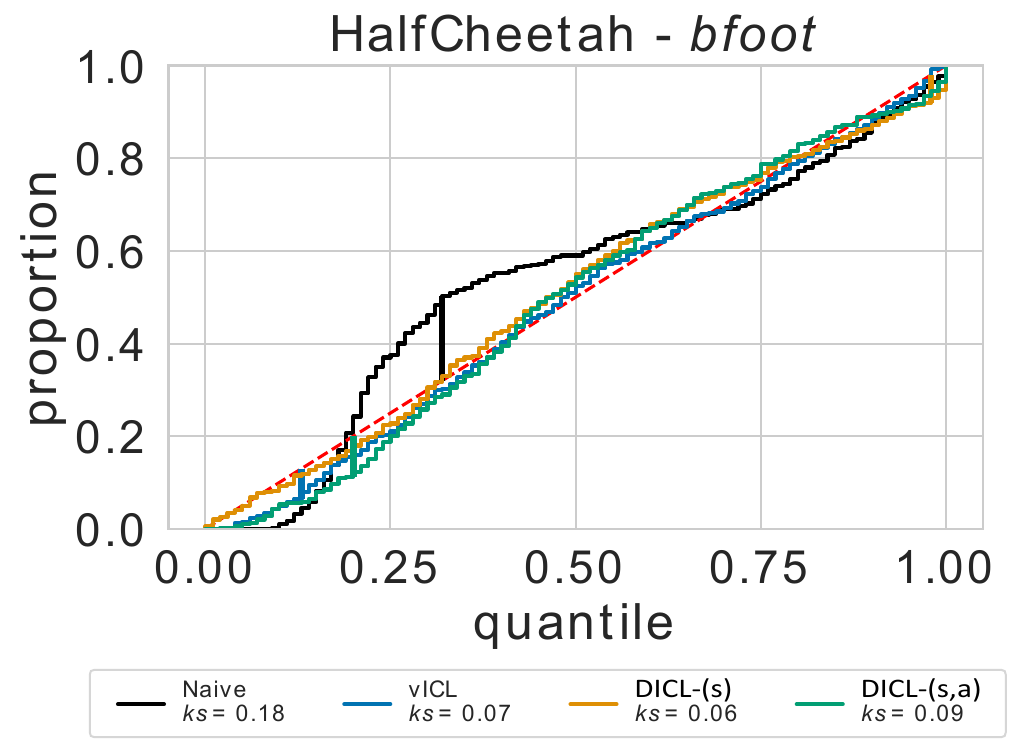}
  \end{center}
\caption{\textbf{Quantile calibration reliability diagram.} The LLM \rebuttal{(\textit{Llama 3 8B})} uncertainty estimates are well-calibrated. Vertical lines show the Kolmogorov-Smirnov statistic for each fit.}
\label{fig:calibration}
\end{wrapfigure}

where $F_i^{-1}:[0,1] \to \mathcal{Y}$ denotes the quantile function $F_i^{-1}(p) = \inf\{y:p \leq F_i(y)\} $ for all $p \in [0,1]$, and $N$ the number of samples.

\paragraph{LLMs are well-calibrated forecasters.} \cref{fig:calibration} shows the reliability diagram for the $bfoot$ dimension of the HalfCheetah system. The overall conclusion is that, regardless of the LLM-based sub-routine used to predict the next state, the uncertainty estimates derived from the LLM's logits are well-calibrated in terms of quantile calibration. Ideally, forecasters should align with the diagonal in \cref{fig:calibration}, which the LLM approach nearly achieves. Furthermore, when comparing with a naive baseline (the details are differed to \cref{appendix:calibration}), the LLM-forecaster matches the baseline when it's already calibrated, and improves over it when it's not. To quantify a forecaster's calibration with a point statistic, we compute the Kolmogorov-Smirnov goodness-of-fit test (\cref{eq:appendix:KS}), shown in the legend of \cref{fig:calibration}.

\section{Discussion}
\label{sec:Conclusion}

\rebuttal{By introducing the \DICL framework, our goal is to bridge the gap between MBRL and LLMs.
Our study raises multiple open questions and future research directions. 
Notably, the choice of the feature transformation is crucial for improving performance in specific applications. 
We plan to explore transformations that capture not only linear but also non-linear dependencies, such as AutoEncoders, as discussed in \cref{appendix:dimensions}. Another possible direction is the integration of textual context information into the LLM prompt. This approach has been shown to enhance the overall pipeline for time series forecasting \citep{jin2024timellm, xue_promptcast_2023} and policy learning \citep{wang_prompt_2023}.}

\rebuttal{
Besides this, our algorithm DICL-SAC performs data augmentation by applying the LLM to generate next states (\cref{eq:icl}). This operation requires a total of \( d_s \) calls to the LLM (or \( c \) after the \( \varphi \) transformation) to generate \( T_{\max} - T \) transitions, as the time steps can be batched. This approach assumes a fixed policy in the context, allowing the LLM to implicitly learn \( P^{\pi_\phi} \) using only the states.
Looking ahead, a future research direction is to explore how to apply \DICL to MBRL by replacing the dynamics model with an LLM. Naively applying \saPCA would require \( (T_{\max} - T) \cdot d_s \) calls to the LLM, as transitions need to be predicted sequentially when actions change. This results in an extremely computationally expensive method, making it infeasible for many applications. Therefore, further research is needed to make this approach computationally efficient.}

\subsection*{Conclusion}

In this paper, we ask how we can leverage the emerging capabilities of Large Language Models to benefit model-based reinforcement learning. We build on previous work that successfully conceptualized in-context learning for univariate time series prediction, and provide a systematic methodology to apply ICL to an MDP's dynamics learning problem. Our methodology, based on a projection of the data in a linearly uncorrelated representation space, proved to be efficient in capturing the dynamics of typical proprioceptive control environments, in addition to being more computationally efficient through dimensionality reduction. 

To derive practical applications of our findings, we tackled two RL use-cases: data-augmented off-policy RL, where our algorithm DICL-SAC improves the sample efficiency of SAC, and benefits from a theoretical guarantee under the framework of model-based multi-branch rollouts. Our second application, consisted in predicting the trajectory of rewards in order to perform hybrid online and model-based policy evaluation. Finally, we showed that the LLM-based dynamics model also provides well-calibrated uncertainty estimates. 


\section*{Acknowledgements}
The authors extend their gratitude to Nicolas Boullé for insightful discussions on the initial concepts of this project, as well as to the authors of the paper~\citep{liu_llms_2024} (Toni J.B. Liu, Nicolas Boullé, Raphaël Sarfati, Christopher J. Earls) for providing access to their codebase. The authors also appreciate the anonymous reviewers and meta-reviewers for their valuable time and constructive feedback. This work was made possible thanks to open-source software, including Python~\citep{van1995python}, PyTorch~\citep{pytorch}, Scikit-learn~\citep{scikit-learn}, and CleanRL~\citep{huang2022cleanrl}.

\section*{Reproducibility Statement}
In order to ensure reproducibility we release the code at \href{https://github.com/abenechehab/dicl}{https://github.com/abenechehab/dicl}. The implementation details and hyperparameters are listed in \cref{appendix:algo}.

\bibliography{main}
\bibliographystyle{main}

\clearpage

\renewcommand*{\proofname}{Proof}

\appendix

\textbf{\LARGE Appendix}

\paragraph{Outline.} In~\cref{appendix:theory}, we prove our main theoretical result (\cref{theorem:main}). We provide an extended related work in~\cref{appendix:rw}. Additional materials about the state and action dimensions interdependence are given in \cref{appendix:dimensions}. The implementation details and hyperparameters of our methods are given in \cref{appendix:algo}. Finally, we provide additional experiments about multi-step errors (\cref{appendix:errors}), calibration (\cref{appendix:calibration}), the impact of the data collecting policy on the prediction error (\cref{appendix:policy}), and details about the ablation study on the choice of the LLM (\cref{appendix:llm}). 

\addtocontents{toc}{\protect\setcounter{tocdepth}{2}}

\renewcommand*\contentsname{\Large Table of Contents}

\tableofcontents
\clearpage

\section{Theoretical analysis}
\label{appendix:theory}

\subsection{Proof of \cref{theorem:main}}

We start by formally defining the LLM multi-branch return $\eta^{\text{llm}}_{p,k,T}$. To do so, we first denote $A_t$ the random event of starting a $k$-step LLM branch at timestep $t$ and we denote $X_t$ the associated indicator random variable $X_t=\mathbbm{1}[A_t]$. We assume that the $(X_t)_{t\ge T}$ are independent. We then define the random event $A_t^k$ that at least one of the $k$ preceding timesteps has been branched, meaning that the given timestep $t$ belongs to at least one LLM branch among the $k$ possible branches: $A_t^k = \bigcup_{i=0}^{k-1} A_{t-i}$. The LLM multi-branch return can then be written as follows:

\begin{equation}
    \begin{split}
        \eta^{\text{llm}}_{p,k,T}(\pi) &= \underbrace{\sum_{t=0}^{T-1} \gamma^t \mathbb{E}_{s_t\sim P^t, a_t \sim \pi} \big[ r(s_t,a_t) \big]}_{\text{Burn-in phase to gather minimal context size } T} \\
        & + \sum_{t=T}^\infty \gamma^t \mathbb{E}_{X_{t-i} \sim b(p), 1\leq i \leq k} \bigg[ \mathbbm{1}[A_t^k] \underbrace{\frac{1}{\sum_{i=1}^k X_{t-i}} \sum_{i=1}^k X_{t-i} \mathbb{E}_{s_t \sim \hat{P}_{t,\text{llm}}^i, a_t \sim \pi} \big[ r(s_t,a_t) \big]}_{\text{average reward among the branches spanning timestep }t}  \\
        & + \underbrace{\mathbbm{1}[\bar{A_t^k}] \mathbb{E}_{s_t\sim P^t, a_t \sim \pi} \big[ r(s_t,a_t) \big]}_{\text{When no branch is spanning timestep }t} \bigg],
    \end{split}
\end{equation}

where $P^t= P(.|P^{t-1})$ with $P^0=\mu_0$ the initial state distribution and $\hat{P}_{t,\text{llm}}^i = \hat{P}_\text{llm}^i(.|P^{t-i})$.

Before continuing, we first need to establish the following lemma.

\begin{lemma}\label{lem:multi-step-error}(Multi-step Error Bound, Lemma B.2 in \cite{frauenknecht2024trustmodeltrusts} and \cite{JanneR2019}.)
Let \( P \) and \( \Tilde{P} \) be two transition functions. Define the multi-step error at time step \( t \), starting from any initial state distribution \( \mu_0 \), as:
\[
\varepsilon_t := D_{\text{TV}}(P^t(\cdot | \mu_0) \| \Tilde{P}^t(\cdot | \mu_0))
\]
with  $P^0 = \Tilde{P}^0 = \mu_0$. \\
Let the one-step error at time step \( t \ge 1 \) be defined as:
\[
\xi_t := \mathbb{E}_{s \sim P^{t-1}(\cdot | \mu_0)} \left[ D_{\text{TV}}(P(\cdot | s) \| \Tilde{P}(\cdot | s)) \right],
\]
and $\xi_0 = \varepsilon_0 = 0$.

Then, the multi-step error satisfies the following bound:
\[
\varepsilon_t \leq \sum_{i=0}^{t} \xi_i.
\]
\end{lemma}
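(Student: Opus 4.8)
The plan is to reduce the whole statement to a single one-step recursion, $\varepsilon_t \le \varepsilon_{t-1} + \xi_t$ for every $t \ge 1$, and then unroll it. Since $\varepsilon_0 = 0$ by definition, iterating this recursion telescopes to $\varepsilon_t \le \sum_{i=1}^t \xi_i = \sum_{i=0}^t \xi_i$ (the last equality using $\xi_0 = 0$), which is exactly the claimed bound. So the entire argument concentrates on proving the recursion; the summation is then a trivial induction on $t$.

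To prove the recursion I would first write each $t$-step marginal as a mixture of one-step kernels over the $(t-1)$-step marginal, $P^t(\cdot\mid\mu_0) = \int_{\mathcal{S}} P(\cdot\mid s)\,dP^{t-1}(s\mid\mu_0)$, and likewise $\Tilde{P}^t(\cdot\mid\mu_0) = \int_{\mathcal{S}} \Tilde{P}(\cdot\mid s)\,d\Tilde{P}^{t-1}(s\mid\mu_0)$. Inserting the hybrid mixture $\int_{\mathcal{S}} \Tilde{P}(\cdot\mid s)\,dP^{t-1}(s\mid\mu_0)$ and applying the triangle inequality for $D_{\text{TV}}$ splits $\varepsilon_t$ into two terms: (i) the distance between $\int P(\cdot\mid s)\,dP^{t-1}$ and $\int \Tilde{P}(\cdot\mid s)\,dP^{t-1}$, and (ii) the distance between $\int \Tilde{P}(\cdot\mid s)\,dP^{t-1}$ and $\int \Tilde{P}(\cdot\mid s)\,d\Tilde{P}^{t-1}$. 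The crucial point is that this particular insertion keeps the \emph{true} marginal $P^{t-1}$ as the mixing measure in the term that will become $\xi_t$.

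For term (i), both mixtures use the same mixing measure $P^{t-1}$ but different kernels, so joint convexity of total variation gives $D_{\text{TV}}\big(\int P(\cdot\mid s)\,dP^{t-1} \,\big\|\, \int \Tilde{P}(\cdot\mid s)\,dP^{t-1}\big) \le \int D_{\text{TV}}(P(\cdot\mid s)\,\|\,\Tilde{P}(\cdot\mid s))\,dP^{t-1}(s) = \xi_t$, which matches the definition of $\xi_t$ exactly because its expectation is taken under $P^{t-1}$. For term (ii), the same kernel $\Tilde{P}$ is applied to the two measures $P^{t-1}$ and $\Tilde{P}^{t-1}$; since pushing two distributions through a fixed Markov kernel is non-expansive for total variation (the contraction property $D_{\text{TV}}(K\mu\,\|\,K\nu)\le D_{\text{TV}}(\mu\,\|\,\nu)$), this term is at most $D_{\text{TV}}(P^{t-1}\,\|\,\Tilde{P}^{t-1}) = \varepsilon_{t-1}$. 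Summing the two bounds yields the recursion.

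I expect the only real subtlety to be choosing the split in the right order: had I instead inserted $\int P(\cdot\mid s)\,d\Tilde{P}^{t-1}$, convexity would have averaged the one-step error under $\Tilde{P}^{t-1}$ rather than $P^{t-1}$, mismatching the stated definition of $\xi_t$. Beyond this bookkeeping, the proof relies only on two elementary and standard properties of total variation—joint convexity and contraction under a shared kernel—which I would either state as lemmas or cite, and everything else is the telescoping induction above.
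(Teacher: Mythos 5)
Your proposal is correct and follows essentially the same route as the paper's proof: the same hybrid insertion (the kernel $\Tilde{P}$ mixed over the true marginal $P^{t-1}$), yielding the recursion $\varepsilon_t \leq \xi_t + \varepsilon_{t-1}$, which is then unrolled by induction. The only difference is presentational: where you invoke joint convexity and the contraction (data-processing) property of total variation as named lemmas, the paper proves exactly these two bounds inline via explicit integral manipulations.
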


\begin{proof} 
Let \( t > 0 \). We start with the definition of the total variation distance:

\begin{align*}
    \varepsilon_t &= D_{\text{TV}}(P^t(\cdot | \mu_0) \| \Tilde{P}^t(\cdot | \mu_0)) \\
    &= \frac{1}{2} \int_{s' \in \mathcal{S}} \left| P^t(s' | \mu_0) - \Tilde{P}^t(s'| \mu_0) \right| \, ds' \\
    &= \frac{1}{2} \int_{s' \in \mathcal{S}} \left| \int_{s \in \mathcal{S}} P(s'|s) P^{t-1}(s|\mu_0) - \Tilde{P}(s'|s) \Tilde{P}^{t-1}(s|\mu_0) \, ds \right| \, ds' \\
    &\leq \frac{1}{2} \int_{s' \in \mathcal{S}} \int_{s \in \mathcal{S}} \left| P(s'|s) P^{t-1}(s|\mu_0) - \Tilde{P}(s'|s) \Tilde{P}^{t-1}(s|\mu_0) \right| \, ds \, ds' \\
    &= \frac{1}{2} \int_{s' \in \mathcal{S}} \int_{s \in \mathcal{S}} \left| P(s'|s) P^{t-1}(s|\mu_0) - \Tilde{P}(s'|s) \Tilde{P}^{t-1}(s|\mu_0) \right| \, ds \, ds' \\
     &= \frac{1}{2} \int_{s' \in \mathcal{S}} \int_{s \in \mathcal{S}} \left| P(s'|s) P^{t-1}(s|\mu_0) - \Tilde{P}(s'|s) P^{t-1}(s|\mu_0) \right. \\
    &\qquad \left. + \Tilde{P}(s'|s) P^{t-1}(s|\mu_0) - \Tilde{P}(s'|s) \Tilde{P}^{t-1}(s|\mu_0) \right| \, ds \, ds' \\
    &\leq \frac{1}{2} \int_{s' \in \mathcal{S}} \int_{s \in \mathcal{S}} P^{t-1}(s|\mu_0) \left| P(s'|s) - \Tilde{P}(s'|s) \right| \, ds \, ds' \\
    &\qquad + \frac{1}{2} \int_{s' \in \mathcal{S}} \int_{s \in \mathcal{S}} \Tilde{P}(s'|s) \left| P^{t-1}(s|\mu_0) - \Tilde{P}^{t-1}(s|\mu_0) \right| \, ds \, ds' \\
    &= \int_{s \in \mathcal{S}} \left[ \frac{1}{2} \int_{s' \in \mathcal{S}} \left| P(s'|s) - \Tilde{P}(s'|s) \right| \, ds' \right] P^{t-1}(s|\mu_0) \, ds \\
    &\qquad + \frac{1}{2} \int_{s \in \mathcal{S}} \left( \int_{s' \in \mathcal{S}} \Tilde{P}(s'|s) \, ds' \right) \left| P^{t-1}(s|\mu_0) - \Tilde{P}^{t-1}(s|\mu_0) \right| \, ds \\
    &= \mathbb{E}_{s \sim P^{t-1}(\cdot | \mu_0)} \left[ D_{\text{TV}}(P(\cdot | \mu_0) \| \Tilde{P}(\cdot | s)) \right] + D_{\text{TV}}(P^{t-1}(\cdot | \mu_0) \| \Tilde{P}^{t-1}(\cdot | \mu_0)) \\
    &= \xi_t + \varepsilon_{t-1}
\end{align*}

Given that \(\xi_0 = \varepsilon_0 = 0\), by induction we have:

\[
\varepsilon_t \leq \sum_{i=0}^{t} \xi_i.
\]
\end{proof}

We now restate and prove \cref{theorem:main}:

\begin{theorem}[Multi-branch return bound]
Let $T$ be the minimal length of the in-context trajectories, $p \in [0, 1]$ the probability that a given state is a branching point. We assume that the reward is bounded and that the expected total variation between the LLM-based model and the true dynamics under a policy $\pi$ is bounded at each timestep by $\max_{t\geq T} \mathbb{E}_{s\sim P^t, a \sim \pi}[D_{\text{TV}}(P(.|s,a)||\hat{P}_{\text{llm}}(.|s,a))] \leq \varepsilon_{\text{llm}}(T)$. Then under a multi-branched rollout scheme with a branch length of $k$, the return is bounded as follows:
\begin{equation}
    |\eta(\pi) - \eta^{\text{llm}}_{p,k,T}(\pi)| \leq  2 \frac{\gamma^{T}}{1-\gamma} r_{\max} k^2 \, p \, \varepsilon_\text{llm}(T) \enspace ,
\end{equation}
where $r_{\max} = \max_{s \in \mathcal{S}, a \in \mathcal{A}} r(s, a)$.
\end{theorem}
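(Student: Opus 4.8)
The plan is to bound the return gap term by term in the discounted sums defining $\eta(\pi)$ and $\eta^{\text{llm}}_{p,k,T}(\pi)$. First I would observe that the two returns share an identical burn-in phase: for $t < T$ both rollouts follow the true dynamics $P$, so those terms cancel exactly and the difference reduces to $\sum_{t=T}^\infty \gamma^t \delta_t$, where $\delta_t$ is the gap between the true expected reward $\mathbb{E}_{s_t\sim P^t, a_t\sim\pi}[r(s_t,a_t)]$ and the expected (over the branch indicators $X_{t-i}$) multi-branch reward at timestep $t$. Using that $\mathbbm{1}[A_t^k] + \mathbbm{1}[\bar{A_t^k}] = 1$ and that the non-branched contribution already equals the true reward, I would rewrite $\delta_t = \mathbb{E}_X\big[\mathbbm{1}[A_t^k]\,\big(\mathbb{E}_{P^t}[r] - \tfrac{1}{\sum_i X_{t-i}}\sum_{i=1}^k X_{t-i}\,\mathbb{E}_{\hat P^i_{t,\text{llm}}}[r]\big)\big]$, i.e. all error is concentrated on the event $A_t^k$ that timestep $t$ is spanned by at least one branch.

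The core estimate is a per-branch reward error. For a single branch of index $i$ (rolled out for $i \le k$ LLM steps after following $P$ up to step $t-i$), I would compare the state law $P^t(\cdot\mid\mu_0)$ against the hybrid law obtained by applying $P$ for the first $t-i$ steps and $\hat P_{\text{llm}}$ for the last $i$ steps. Defining the hybrid transition $\tilde P$ accordingly, \cref{lem:multi-step-error} gives $D_{\text{TV}} \le \sum_{j} \xi_j$, and since $\tilde P = P$ on the first $t-i$ steps all those one-step errors vanish, leaving only the $i$ LLM steps, each bounded by $\varepsilon_{\text{llm}}(T)$ by assumption. Hence the state-law total variation is at most $i\,\varepsilon_{\text{llm}}(T) \le k\,\varepsilon_{\text{llm}}(T)$. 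Combining this with boundedness of the reward through $|\mathbb{E}_\mu[r]-\mathbb{E}_\nu[r]| \le 2 r_{\max} D_{\text{TV}}(\mu\|\nu)$ gives $|\mathbb{E}_{P^t}[r] - \mathbb{E}_{\hat P^i_{t,\text{llm}}}[r]| \le 2 r_{\max} k\,\varepsilon_{\text{llm}}(T)$ for every branch. Because the bracketed term in $\delta_t$ is a convex combination (weights $X_{t-i}/\sum_i X_{t-i}$) of such per-branch differences, it inherits the same bound.

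It then remains to control the probability of the spanning event and sum the geometric series. Since the $X_s\sim \mathrm{Bernoulli}(p)$ are independent, a union bound gives $\mathbb{E}_X[\mathbbm{1}[A_t^k]] = \mathbb{P}(A_t^k) = \mathbb{P}(\bigcup_{i=0}^{k-1}A_{t-i}) \le k\,p$. Plugging this in yields $|\delta_t| \le 2 r_{\max} k^2\,p\,\varepsilon_{\text{llm}}(T)$ uniformly in $t$, and summing $\sum_{t=T}^\infty \gamma^t = \gamma^T/(1-\gamma)$ produces the claimed bound. The main obstacle I anticipate is handling the random weighting $1/\sum_i X_{t-i}$ cleanly: one must argue on the event $A_t^k$ (where the denominator is $\ge 1$) that an average of per-branch errors stays bounded by the worst per-branch error, and track the boundary timesteps $T \le t < T+k$, where fewer than $k$ candidate branching points exist, so that the union bound and the $i \le k$ truncation remain valid. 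The total-variation-to-reward step and the geometric summation are routine once these bookkeeping points are settled.
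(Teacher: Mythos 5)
Your proposal follows essentially the same route as the paper's proof: cancel the burn-in phase, concentrate the gap on the spanning event $A_t^k$, bound each per-branch reward difference by $2 r_{\max}\, k\, \varepsilon_{\text{llm}}(T)$ via \cref{lem:multi-step-error} (noting that only the LLM steps contribute one-step errors), use that the convex combination over branches preserves this bound, apply the union bound $\mathbb{P}(A_t^k) \le kp$, and sum the geometric series to get $\gamma^T/(1-\gamma)$. The argument is correct; the only difference is that you spell out two bookkeeping points (the hybrid-law justification for invoking the lemma, and the boundary timesteps $T \le t < T+k$) that the paper leaves implicit.
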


\begin{proof}

\textbf{Step 1: Expressing the bound in terms of horizon-dependent errors.} \\


\begin{align*} \label{eq:proof}
     | \eta(\pi) - \eta^{\text{llm}}_{p,k,T}(\pi)| &= \bigg \lvert \sum_{t=T}^\infty \gamma^t \mathbb{E}_{s_t\sim P^t, a_t \sim \pi} \big[ r(s_t,a_t) \big] \\ 
    &- \mathbb{E}_{X_{t-i} \sim b(p), 1\leq i \leq k} \bigg[ \mathbbm{1}[A_t^k] \frac{1}{\sum_{i=1}^k  X_{t-i}} \sum_{i=1}^k X_{t-i} \mathbb{E}_{s_t \sim \hat{P}_{t,\text{llm}}^i, a_t \sim \pi} \big[ r(s_t,a_t) \big] \\
    &- \mathbbm{1}[\bar{A_t^k}] \mathbb{E}_{s_t\sim P^t, a_t \sim \pi} \big[ r(s_t,a_t) \big]  \bigg]  \bigg \lvert\\
    &\leq \sum_{t=T}^\infty \gamma^t \bigg \lvert \mathbb{E}_{X_{t-i} \sim b(p), 1\leq i \leq k} \bigg[ \mathbbm{1}[A_t^k] \mathbb{E}_{s_t\sim P^t, a_t \sim \pi} \big[ r(s_t,a_t) \big] + \mathbbm{1}[\bar{A}_t^k] \mathbb{E}_{s_t\sim P^t, a_t \sim \pi} \big[ r(s_t,a_t) \big] \bigg]  \\
    &- \mathbb{E}_{X_{t-i} \sim b(p), 1\leq i \leq k} \bigg[ \mathbbm{1}[A_t^k] \frac{1}{\sum_{i=1}^k X_{t-i}} \sum_{i=1}^k X_{t-i} \mathbb{E}_{s_t \sim \hat{P}_{t,\text{llm}}^i, a_t \sim \pi} \big[ r(s_t,a_t) \big] \\
    &- \mathbbm{1}[\bar{A_t^k}] \mathbb{E}_{s_t\sim P^t, a_t \sim \pi} \big[ r(s_t,a_t) \big]  \bigg] \bigg \lvert \\
    &\leq \sum_{t=T}^\infty \gamma^t \bigg | \mathbb{E}_{X_{t-i} \sim b(p), 1\leq i \leq k} \bigg [ \mathbbm{1}[A_t^k] \bigg( \\         & \mathbb{E}_{s_t\sim P^t, a_t \sim \pi} \big[ r(s_t,a_t) \big] - \frac{1}{\sum_{i=1}^k X_{t-i}} \sum_{i=1}^k X_{t-i} \mathbb{E}_{s_t \sim \hat{P}_{t,\text{llm}}^i, a_t \sim \pi} \big[ r(s_t,a_t) \big] \bigg) \bigg ] \bigg | \\
    &\leq \sum_{t=T}^\infty \gamma^t \bigg \lvert \mathbb{E}_{X_{t-i} \sim b(p), 1\leq i \leq k} \bigg[ \mathbbm{1}[A_t^k]  \frac{1}{\sum_{i=1}^k X_{t-i}} \sum_{i=1}^k X_{t-i} \bigg( \\         
    & \mathbb{E}_{s_t\sim P^t, a_t \sim \pi} \big[ r(s_t,a_t) \big] -  \mathbb{E}_{s_t \sim \hat{P}_{t,\text{llm}}^i, a_t \sim \pi} \big[ r(s_t,a_t) \big] \bigg) \bigg ] \bigg |
\end{align*}

We then expand the integrals in the terms $\mathbb{E}_{s_t\sim P^t, a_t \sim \pi} \big[ r(s_t,a_t) \big] -  \mathbb{E}_{s_t \sim \hat{P}_{t,\text{llm}}^i, a_t \sim \pi} \big[ r(s_t,a_t) \big]$ and express it in terms of horizon-dependent multi-step model errors:

\begin{equation}
    \begin{split}
        \mathbb{E}_{s_t\sim P^t, a_t \sim \pi} & \big[ r(s_t,a_t) \big] -  \mathbb{E}_{s_t \sim \hat{P}_{t,\text{llm}}^i, a_t \sim \pi} \big[ r(s_t,a_t) \big] \\
        & = \int_{s \in \mathcal{S}}
        \int_{a \in \mathcal{A}}
        r(s,a) \big( P^t(s,a) - \hat{P}_{t,\text{llm}}^i(s,a) \big) \; da \; ds \\
        & \leq r_{\max}  \int_{s \in \mathcal{S}} \int_{a \in \mathcal{A}} \big( P^t(s,a) - \hat{P}_{t,\text{llm}}^i(s,a) \big) \; da \; ds \\
        & \leq r_{\max} \int_{s \in \mathcal{S}}
        \int_{a \in \mathcal{A}}
        \big( P^t(s) - \hat{P}_{t,\text{llm}}^i(s) \big) \pi(a|s) \; da \; ds \\
        & \leq r_{\max} \int_{s \in \mathcal{S}} 
        \big( P^t(s) - \hat{P}_{t,\text{llm}}^i(s) \big) \; ds \\
        & \leq 2 r_{\max} D_{\text{TV}}(P^t||\hat{P}_{t,\text{llm}}^i)
    \end{split}
\end{equation}

\textbf{Step 2: Simplifying the bound.} \\
By applying \cref{lem:multi-step-error} we can bound the multi-step errors using the bound on one-step errors:

\begin{equation}
    D_{\text{TV}}(P^t||\hat{P}_{t,\text{llm}}^i) \leq i \, \varepsilon_\text{llm}(T) \leq k \, \varepsilon_\text{llm}(T) 
\end{equation}

Therefore, the bound becomes:
\begin{equation}
    \begin{split}
        |\eta(\pi) - \eta^{\text{llm}}_{p,k,T}(\pi)| &\leq 2 r_{\max} \, k \, \varepsilon_\text{llm}(T) \sum_{t=T}^{\infty} \gamma^t \left| \mathbb{E}_{X_{t-i} \sim b(p), 1 \leq i \leq k} \left[ \mathbbm{1}[A_t^k] \frac{1}{\sum_{i=1}^k X_{t-i}} \sum_{i=1}^k X_{t-i} \right] \right| \\
        &= 2 r_{\max} \, k \, \varepsilon_\text{llm}(T) \sum_{t=T}^{\infty} \gamma^t \left| \mathbb{E}_{X_{t-i} \sim b(p), 1 \leq i \leq k} \left[ \mathbbm{1}[A_t^k] \right] \right| \\
        &\leq 2 r_{\max} \, k \,  \varepsilon_\text{llm}(T) \sum_{t=T}^{\infty} \gamma^t k p \\
        &= 2 \frac{\gamma^{T}}{1-\gamma} r_{\max} k^2 \, p \, \varepsilon_\text{llm}(T)
    \end{split}
\end{equation}

\end{proof}


\section{Related Work}
\label{appendix:rw}

\paragraph{Model-based reinforcement learning (MBRL).} MBRL has been effectively used in iterated batch RL by alternating between model learning and planning \citep{Deisenroth2011,HafneR2021,Gal2016,Levine2013,Chua2018,JanneR2019,Kegl2021}, and in the offline (pure batch) RL where we do one step of model learning followed by policy learning \citep{Yu2020,Kidambi2020,Lee2021,Argenson2021,Zhan2021,Yu2021,Liu2021,Benechehab_2023}. 
Planning is used either at decision time via model-predictive control (MPC) \citep{Draeger1995,Chua2018,HafneR2019,Pinneri2020,Kegl2021}, or in the background where a model-free agent is learned on imagined model rollouts (Dyna; \citet{JanneR2019,Sutton1991,Sutton1992,Ha2018}), or both. 
For example, model-based policy optimization (MBPO) \citep{JanneR2019} trains an ensemble of feed-forward models and generates imaginary rollouts to train a soft actor-critic agent.


\paragraph{LLMs in RL.} LLMs have been integrated into reinforcement learning (RL) \citep{cao2024surveylargelanguagemodelenhanced, yang_foundation_2023}, playing key roles in enhancing decision-making \citep{kannan_smart-llm_2024, pignatelli_assessing_nodate, zhang_how_2024, feng_large_2024}, reward design \citep{kwon2023rewarddesignlanguagemodels, wu2024readreaprewardslearning, carta2023groundinglargelanguagemodels, liu_tail_2023}, and information processing \citep{poudel_langwm_2023, lin_learning_2024}. The use of LLMs as world models is particularly relevant to our work. More generally, the Transformer architecture \citep{vaswani2017attention} has been used in offline RL (Decision Transformer \cite{chen2021decisiontransformerreinforcementlearning}; Trajectory Transformer \cite{janner_offline_2021}). Pre-trained LLMs have been used to initialize decision transformers and fine-tune them for offline RL tasks \citep{shi_unleashing_2023, reid_can_2022, yang_pre-trained_2024}. As world models, Dreamer-like architectures based on Transformers have been proposed \citep{micheli_transformers_2022, zhang_storm_2023, chen2022transdreamerreinforcementlearningtransformer}, demonstrating efficiency for long-memory tasks such as Atari games. In text-based environments, LLMs have found multiple applications \citep{lin_learning_2024, feng_large_2024, zhang_how_2024, ma_explorllm_2024}, including using code-generating LLMs to generate policies in a zero-shot fashion \citep{liang_code_2023, liu_rl-gpt_2024}.

The closest work to ours is \citet{wang_prompt_2023}, where a system prompt consisting of multiple pieces of information about the control environment (e.g., description of the state and action spaces, nature of the controller, historical observations, and actions) is fed to the LLM. Unlike our approach, which focuses on predicting the dynamics of RL environments, \citet{wang_prompt_2023} aim to directly learn a low-level control policy from the LLM, incorporating extra information in the prompt. Furthermore, \citet{wang_prompt_2023} found that only GPT-4 was usable within their framework, while we provide a proof-of-concept using smaller open LLMs such as Llama 3.2 1B.

\paragraph{ICL on Numerical Data.} In-context learning for regression tasks has been theoretically analyzed in several works, providing insights based on the Transformer architecture \citep{li2023transformers, von_oswald_transformers_2023, akyurek_what_2023, garg_what_2023, xie_explanation_2022}. Regarding time series forecasting, LLMTime \citep{gruver_large_2023} successfully leverages ICL for zero-shot extrapolation of one-dimensional time series data.
Similarly, \citet{das_decoder-only_2024} introduce a foundational model for one-dimensional zero-shot time series forecasting, while \citet{xue_promptcast_2023} combine numerical data and text in a question-answer format. ICL can also be used to approximate a continuous density from the LLM logits. For example, \citet{liu_llms_2024} develop a Hierarchical \emph{softmax} algorithm to infer the transition rules of uni-dimensional Markovian dynamical systems. Building on this work, \citet{zekri_can_2024} provide an application that predicts the parameter value trajectories in the Stochastic Gradient Descent algorithm. More relevant to our work, \citet{requeima_llm_2024} presented \emph{LLMProcesses}, a method aimed at extracting multi-dimensional distributions from LLMs. Other practical applications of ICL on numerical data include few-shot classification on tabular data \citep{hegselmann_tabllm_2023}, regression \citep{vacareanu_words_2024}, and meta ICL \citep{coda-forno_meta--context_2023}.


\section{State and action dimensions interdependence - additional materials}
\label{appendix:dimensions}

\subsection{Principal Component Analysis (PCA)}
\label{appendix:pca}

\paragraph{Principal Component Analysis.} PCA is a dimensionality reduction technique that transforms the original variables into a new set of variables, the principal components, which are linearly uncorrelated. The principal components can be ordered such that the first few retain most of the variation present in all of the original variables. Formally, given a data matrix $\mathbf{X}$ with $n$ observations and $p$ variables, PCA diagonalizes the covariance matrix $\mathbf{C} = \frac{1}{n-1} \mathbf{X}^T \mathbf{X}$ to find the eigenvectors, which represent the directions of the principal components: $\text{PCA: } \mathbf{X} \rightarrow \mathbf{Z} = \mathbf{X} \mathbf{W}, \quad \text{where } \mathbf{W} \text{ are the eigenvectors of } \mathbf{C}$. In our case, the data represents a dataset of states and actions given a data collecting policy $\pi_D$, while the $p$ variables represent the state (eventually also the action) dimensions.

\begin{wrapfigure}[15]{r}{0.4\textwidth}
  \begin{center}
  \includegraphics[width=0.38\textwidth]{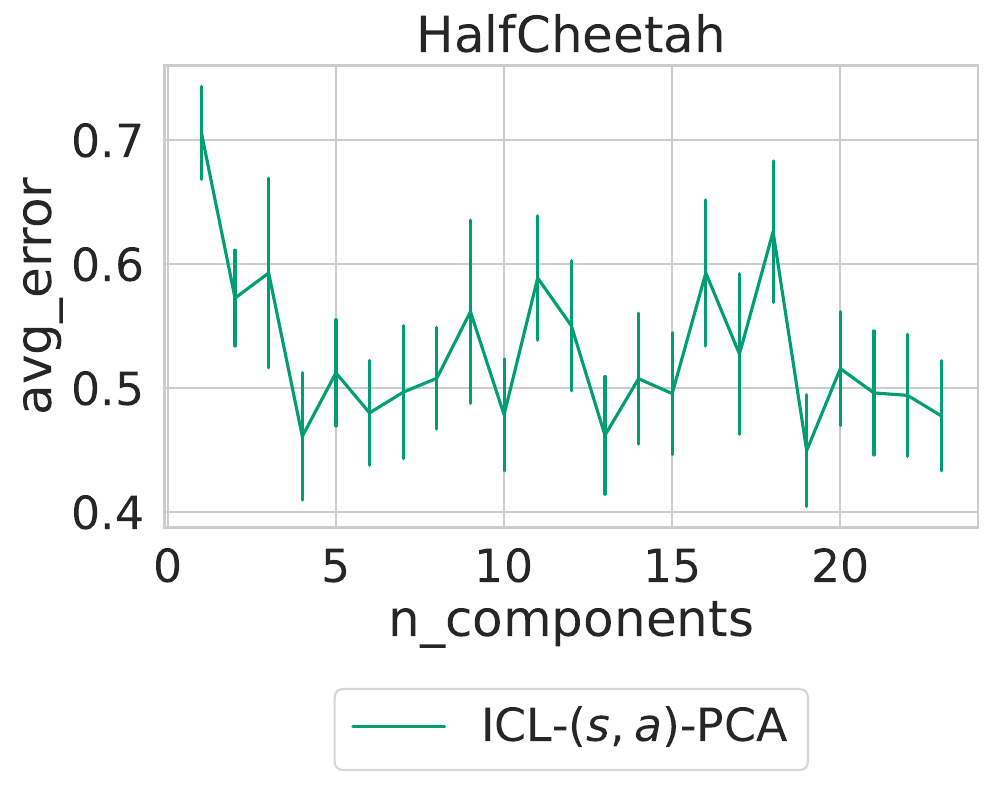}
  \end{center}
  \caption{Ablation study on the number of principal components in the \saPCA method.}
  \label{fig:pca_ablation}
\end{wrapfigure}

\paragraph{Ablation on the number of components.} \cref{fig:pca_ablation} shows an ablation study on the number of components used in the \saPCA method. Surprisingly, we observe a sharp decline in the average multi-step error (see \cref{appendix:errors} for a detailed definition) given only $4$ components among $23$ in the HalfCheetah system. The error then slightly increases for an intermediate number of components, before going down again when the full variance is recovered. This finding strengthens the position of PCA as our Disentangling algorithm of choice in \DICL.

\subsection{Independent Component Analysis (ICA)}

ICA is a statistical and computational technique used to separate a multivariate signal into additive, statistically independent components. 
Unlike PCA, which decorrelates the data, ICA aims to find a linear transformation that makes the components as independent as possible. 
Given a data matrix $\mathbf{X}$, ICA assumes that the data is generated as linear mixtures of independent components: $\mathbf{X} = \mathbf{A} \mathbf{S}$, where $\mathbf{A}$ is an unknown mixing matrix and $\mathbf{S}$ is the matrix of independent components with independent rows. 
The goal of ICA is to estimate an unmixing matrix $\mathbf{W}$ such that $\mathbf{Y} = \mathbf{W} \mathbf{X}$ is a good approximation of the independent components $\mathbf{S}$. 
The implications of ICA on independence are profound: while PCA only guarantees uncorrelated components, ICA goes a step further by optimizing for statistical independence, often measured by non-Gaussianity (kurtosis or negentropy).

\begin{wrapfigure}[14]{r}{0.4\textwidth}
  \begin{center}
  \vspace{-0.9cm}
  \includegraphics[width=0.38\textwidth]{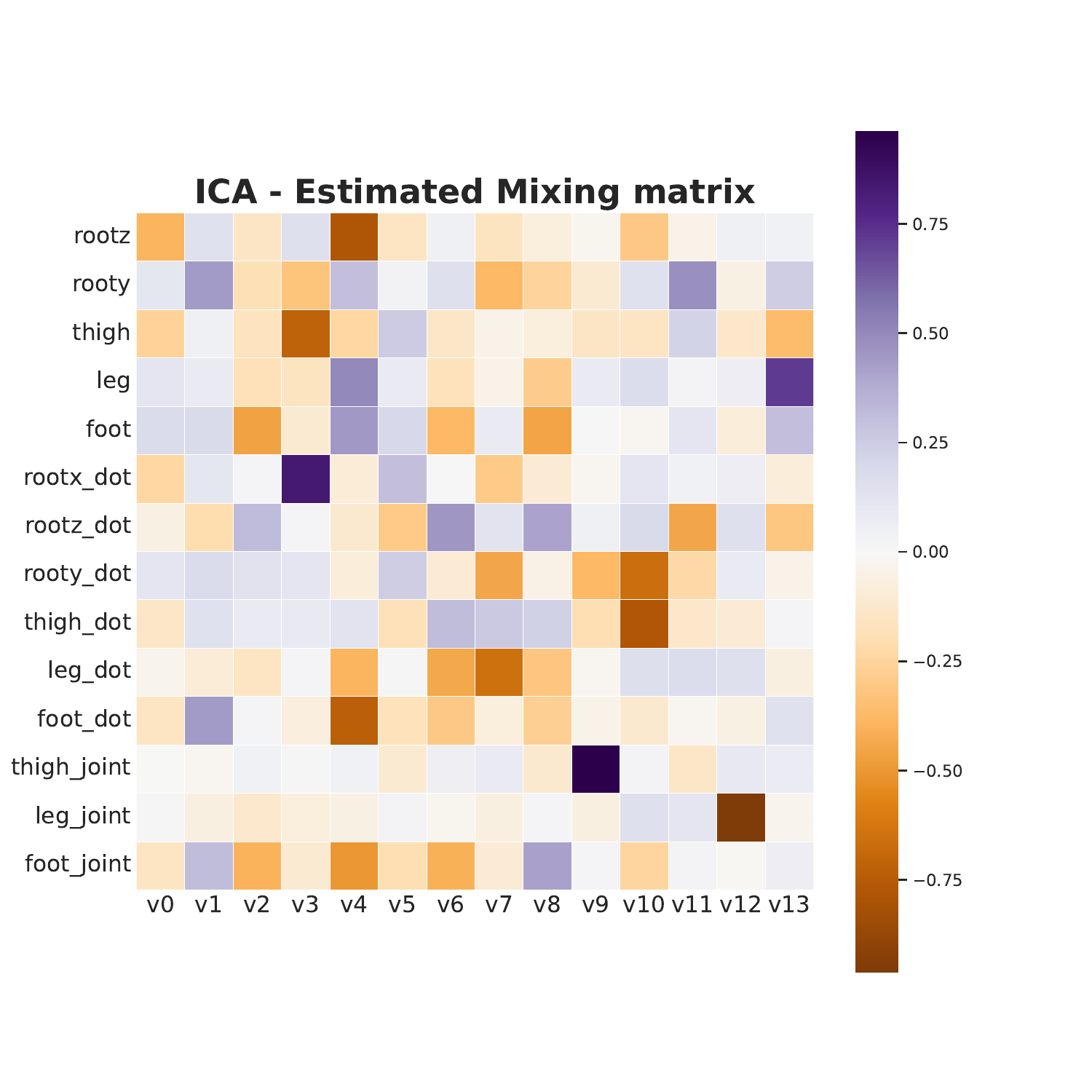}
  \end{center}
  \vspace{-0.4cm}
  \caption{ICA estimated mixing matrix.}
  \label{fig:ica}
\end{wrapfigure}

\cref{fig:ica} shows the estimated mixing matrix $\mathbf{A}$ when running ICA on the D4RL-\emph{expert} dataset on the Hopper environment. Under the assumptions of ICA, notably the statistical independence of the source signals, their linear mixing and the invertibility of the original (unknown) mixing matrix, the original sources are successfully recovered if each line of the estimated mixing matrix is mostly dominated by a single value, meaning that it's close to an identity matrix up to a permutation with scaling. In the case of our states and actions data, it's not clear that this is the case from \cref{fig:ica}.
Similarly to PCA, we can transform the in-context multi-dimensional signal using ICA, and apply the ICL procedure to the recovered independent sources. We plan on exploring this method in future follow-up work.

\subsection{AutoEncoder-based approach}

Variational Autoencoders (VAEs) \citep{kingma2022autoencodingvariationalbayes} offer a powerful framework for learning representations. A disentangled representation is one where each dimension of the latent space captures a distinct and interpretable factor of variation in the data. By combining an encoder network that maps inputs to a probabilistic latent space with a decoder network that reconstructs the data, VAEs employ the reparameterization trick to enable backpropagation through the sampling process. The key to disentanglement lies in the KL-divergence term of the VAE loss function, which regularizes the latent distribution to be close to a standard normal distribution. Variants such as $\beta$-VAE \citep{higgins2017betavae} further emphasize this regularization by scaling the KL-divergence term, thereby encouraging the model to learn a more disentangled representation at the potential cost of reconstruction quality. Beyond simple VAEs, there exist previous work in the literature that specifically aim at learning a factorized posterior distribution in the latent space \citep{kim2019disentanglingfactorising}. Although this direction looks promising, it strikes different concerns about the learnability of these models in the low data regime considered in our paper.

\subsection{Sensitivity Analysis}

The preceding analysis examines state dimensions as features within a representation space, disregarding their temporal nature and our ultimate objective of predicting the next state. In practice, our interest lies in capturing the dependencies that most significantly influence the next state through the dynamics function of the MDP. To achieve this, we use Sensitivity Analysis (SA) to investigate how variations in the input of the dynamics function impact its output.

\begin{wrapfigure}[14]{r}{0.4\textwidth}
  \begin{center}
  \includegraphics[width=0.38\textwidth]{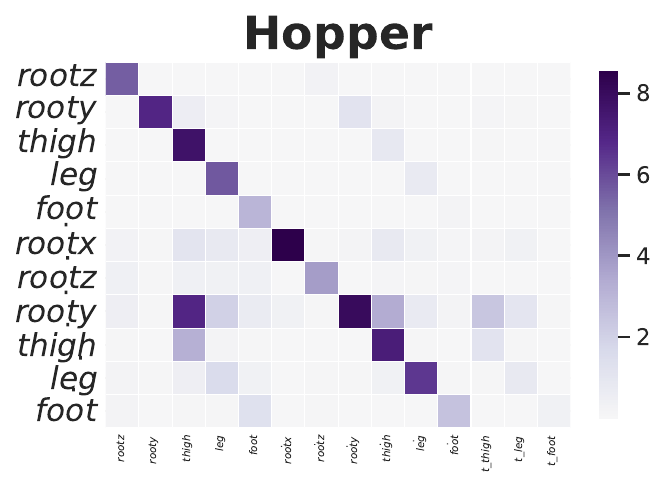}
  \end{center}
  \caption{Sensitivity matrix.}
  \label{fig:sensitivity}
\end{wrapfigure}

\paragraph{Sensitivity Analysis.} Sensitivity analysis is a systematic approach to evaluate how the uncertainty in the output of a model can be attributed to different sources of uncertainty in the model's inputs. 
The One-at-a-Time (OAT) method is a technique used to understand the impact of individual input variables on the output of a model. 
In the context of a transition function of a MDP, the OAT method involves systematically varying one current state or action dimension at a time, while keeping all others fixed, and observing the resulting changes in the output dimensions: $\frac{\partial (\mathbf{s}_{t+1})_k}{\partial (\mathbf{s}_t)_i} \text{ and } \frac{\partial (\mathbf{s}_{t+1})_k}{\partial (\mathbf{a}_t)_j}$, where \( (\mathbf{s}_t)_i \), \( (\mathbf{a}_t)_j \) and \( (\mathbf{s}_{t+1})_k \) denote the \(i\)-th dimension of the state, the \(j\)-th dimension of the action, and the \(k\)-th dimension of the next state, respectively.

In practice, we measure the sensitivity by applying a perturbation (of scale $10\%$) to each input dimension separately, reporting the absolute change that occurs in each dimension of the output. Precisely, for a deterministic transition function $f$, input state dimension $i$, and output dimension $k$, we measure $|f(s+\epsilon,a)_k - f(s,a)_k|$ where $\epsilon_i = 0.1 \times scale(i)$ and $0$ elsewhere. 
The sensitivity matrix in \cref{fig:sensitivity} demonstrates that most of the next state dimensions are mostly affected by their respective previous values (the diagonal shape in the state dimensions square). 
In addition to that, actions only directly affect some state dimensions, specifically velocities, which is expected from the nature of the physics simulation underlying those systems. This finding suggests that the \vICL method might give good results in practice for the considered RL environments, and makes us hope that the \sPCA approach is enough to capture the state dimensions dependencies, especially for single-step prediction.

\begin{remark}
    This sensitivity analysis is specific to the single-step transition function. In practice, such conclusions might change when looking at a larger time scale of the simulation.
\end{remark}

\section{Algorithms}
\label{appendix:algo}

\subsection{Soft-Actor Critic}

Soft Actor-Critic (SAC) \citep{Haarnoja2018} is an off-policy algorithm that incorporates the maximum entropy framework, which encourages exploration by seeking to maximize the entropy of the policy in addition to the expected return. SAC uses a deep neural network to approximate the policy (actor) and the value functions (critics), employing two Q-value functions to mitigate positive bias in the policy improvement step typical of off-policy algorithms. This approach helps in learning more stable and effective policies for complex environments, making SAC particularly suitable for tasks with high-dimensional, continuous action spaces.

We use the implementation provided in CleanRL~\citep{huang2022cleanrl} for SAC. In all environments, we keep the default hyperparameters provided with the library, except for the update frequency. We specify in \cref{table:sac} the complete list of hyperparameters used for every considered environment.

\begin{table}[!ht]
  \scriptsize
  \caption{SAC hyperparameters.}
  \label{table:sac}
  \centering
  \begin{tabular}{llll}
    \toprule
    Environment
    & HalfCheetah
    & Hopper
    & Pendulum
    \\
    \cmidrule(r){1-4}
    Update frequency
    & $1000$   
    & $1000$ 
    & $200$ 
    \\
    Learning starts
    & $5000$   
    & $5000$ 
    & $1000$ 
    \\
    Batch size
    & $128$   
    & $128$ 
    & $64$ 
    \\
    Total timesteps
    & $1e6$   
    & $1e6$ 
    & $1e4$ 
    \\
    Gamma $\gamma$
    & $0.99$   
    & $0.99$ 
    & $0.99$ 
    \\
    policy learning rate
    & $3\mathrm{e}-4$   
    & $3\mathrm{e}-4$ 
    & $3\mathrm{e}-4$ 
    \\
    \bottomrule
  \end{tabular}
\end{table}

\subsection{DICL-SAC}

For our algorithm, we integrate an LLM inference interface (typically the Transformers library from Huggingface~\citep{wolf-etal-2020-transformers}) with CleanRL~\citep{huang2022cleanrl}. \cref{table:sac_icl} shows all DICL-SAC hyperparameter choices for the considered environments.

\begin{table}[!ht]
  \scriptsize
  \caption{DICL-SAC hyperparameters.}
  \label{table:sac_icl}
  \centering
  \begin{tabular}{llll}
    \toprule
    Environment
    & HalfCheetah
    & Hopper
    & Pendulum
    \\
    \cmidrule(r){1-4}
    Update frequency
    & $1000$   
    & $1000$ 
    & $200$ 
    \\
    Learning starts
    & $5000$   
    & $5000$ 
    & $1000$ 
    \\
    \textcolor{darkcerulean}{LLM Learning starts}
    & $10000$   
    & $10000$ 
    & $2000$ 
    \\
    \textcolor{darkcerulean}{LLM Learning frequency}
    & $256$   
    & $256$ 
    & $16$ 
    \\
    Batch size
    & $128$   
    & $128$ 
    & $64$ 
    \\
     \textcolor{darkcerulean}{LLM Batch size ($\alpha \%$)}
    & $7(5\%), 13(10\%), 32(25\%)$ 
    & $7(5\%), 13(10\%), 32(25\%)$  
    & $4(5\%), 7(10\%), 16(25\%)$  
    \\
    Total timesteps
    & $1e6$   
    & $1e6$ 
    & $1e4$ 
    \\
    Gamma $\gamma$
    & $0.99$   
    & $0.99$ 
    & $0.99$ 
    \\
     \textcolor{darkcerulean}{Max context length}
    & $500$   
    & $500$ 
    & $198$ 
    \\
     \textcolor{darkcerulean}{Min context length}
    & $1$   
    & $1$ 
    & $1$ 
    \\
     \textcolor{darkcerulean}{LLM sampling method}
    & $mode$   
    & $mode$ 
    & $mode$ 
    \\
     \textcolor{darkcerulean}{LLM dynamics learner}
    & \vICL   
    & \vICL 
    & \vICL 
    \\
    \bottomrule
  \end{tabular}
\end{table}

\paragraph{Balancing gradient updates.} To ensure that DICL-SAC performs equally important gradient updates on the LLM generated data, we used a gradient updates balancing mechanism. Indeed, since the default reduction method of loss functions is averaging, the batch $\mathcal{B}$ with the smallest batch size gets assigned a higher weight when doing gradient descent: $\frac{1}{|\mathcal{B}|}$. To address this, we multiply the loss corresponding to the LLM generated batch $\mathcal{B}_{\text{llm}}$ with a correcting coefficient $\frac{|\mathcal{B}_{\text{llm}}|}{|\mathcal{B}|}$ ensuring equal weighting across all samples.

We now show the full training curves on the HalfCheetah and Hopper environments (\cref{fig:appendix:return}). The return curves show smoothed average training curves $\pm$ 95\% Gaussian confidence intervals for 5 seeds in HalfCheetah and Hopper, and 10 seeds for Pendulum. 

\begin{figure}[ht]
     \centering
     \begin{subfigure}[b]{0.37\columnwidth}
         \centering
         \includegraphics[width=\textwidth]{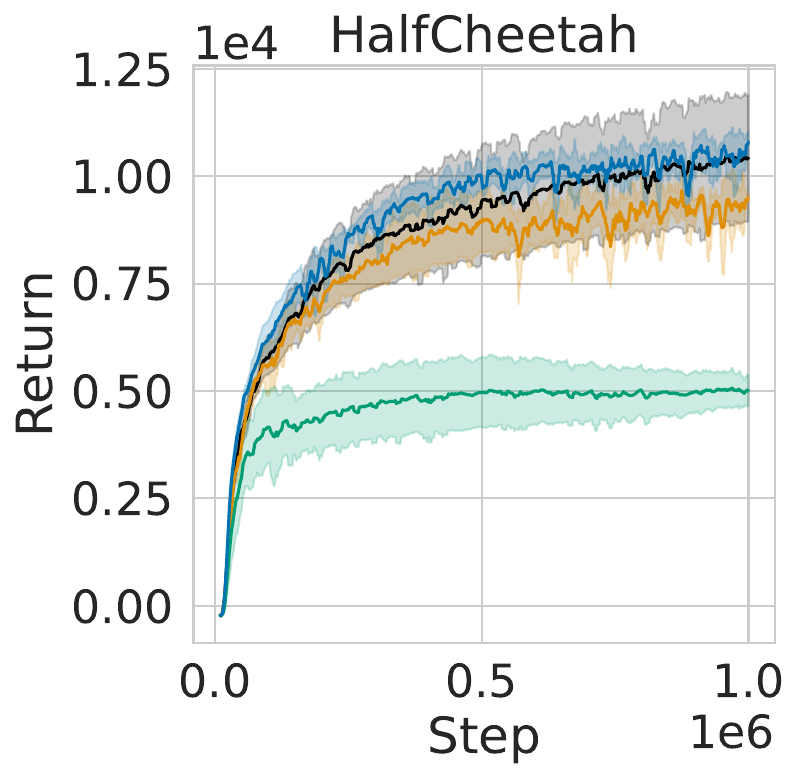}
     \end{subfigure}
     \begin{subfigure}[b]{0.33\columnwidth}
         \centering
         \includegraphics[width=\textwidth]{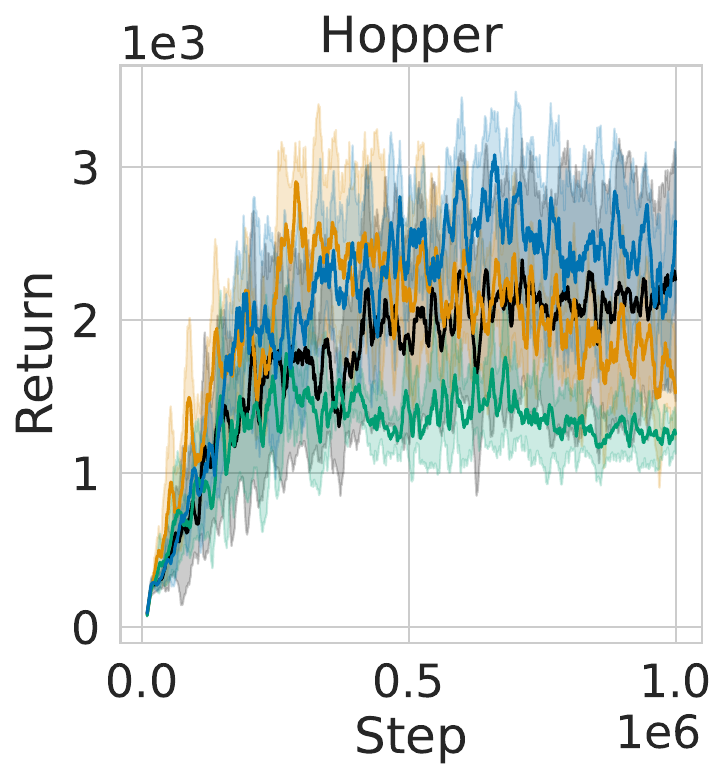}
     \end{subfigure}
     \vfill
     \begin{subfigure}[b]{0.7\columnwidth}
         \centering
         \includegraphics[width=\textwidth]{figures/legend_return.pdf}
     \end{subfigure}
    \caption{\textbf{Data-augmented off-policy RL.} Full training curves.}
    \label{fig:appendix:return}
\end{figure}

\rebuttal{\textbf{The update frequency.} The default update frequency of SAC is $1$ step, meaning that the policy that interacts with the environment gets updated after every interaction. In our LLM-based framework, this introduces an additional layer of complexity at this implies that the state visitation distribution of the in-context trajectories will be moving from one timestamp to another. We therefore assume an update frequency equal to the maximal number of steps of an episode of a given environment.}

\rebuttal{It is important to mention that the choice of setting the update frequency for all algorithms to the number of steps equivalent to a full episode has dual implications: it can stabilize the data collection policy, which is beneficial, but it may also lead to overtraining on data gathered by early, low-quality policies, which is detrimental. This trade-off has been previously studied in the RL literature \citep{matsushima2021deploymentefficient, thomas2024fair}. Notably, \cite{thomas2024fair} argues that the update frequency is more of a system constraint than a design choice or hyperparameter. For instance, controlling a physically grounded system, such as a helicopter, inherently imposes a minimal update frequency. Therefore, we deem it a fair comparison as this constraint is uniformly applied to all algorithms.}

\rebuttal{For the sake of completeness and comparison, we also evaluated the SAC baseline using its default update frequency of one step. \cref{fig:app_return_sacdefault} shows the comparison of our algorithm DICL-SAC, the baseline SAC with update frequency 1000, and the default SAC with update frequency 1. We see that on Halfcheetah the default SAC ($uf=1$) performs similarly to SAC with an update frequency of 1000. On Pendulum and Hopper it performs slightly better with \DICL remaining competitive while having the constraint of an update frequency of 1000.}

\begin{figure}[ht]
     \centering
     \begin{subfigure}[b]{0.33\columnwidth}
         \centering
         \includegraphics[width=\textwidth]{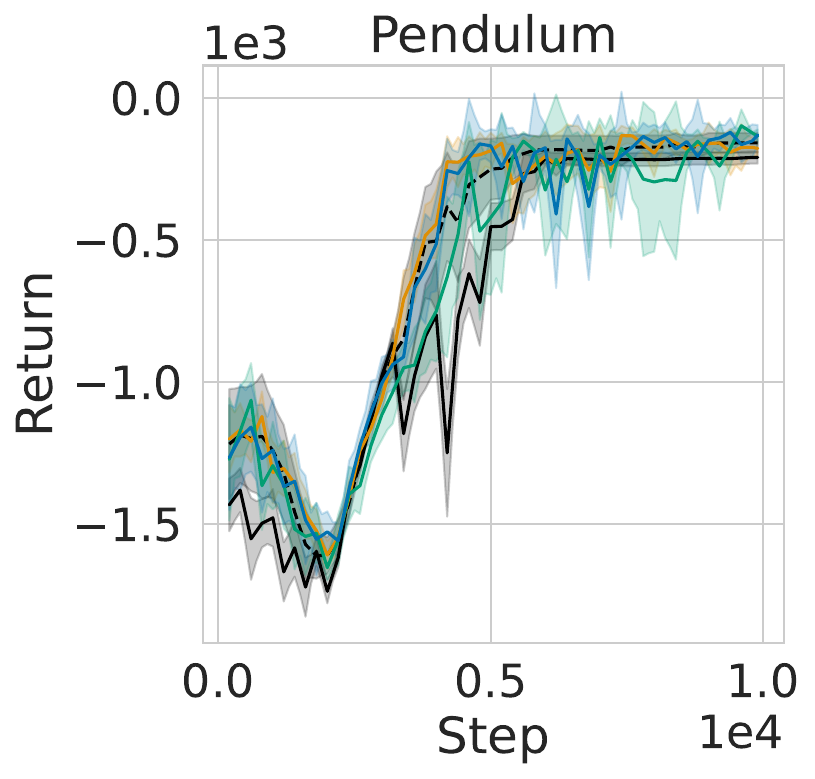}
     \end{subfigure}
     \begin{subfigure}[b]{0.30\columnwidth}
         \centering
         \includegraphics[width=\textwidth]{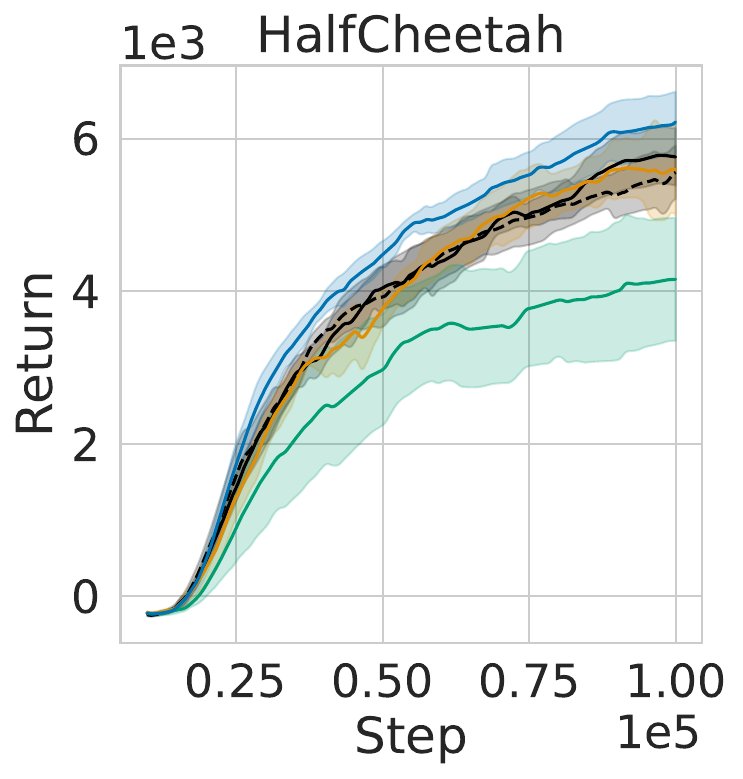}
     \end{subfigure}
     \begin{subfigure}[b]{0.32\columnwidth}
         \centering
         \includegraphics[width=\textwidth]{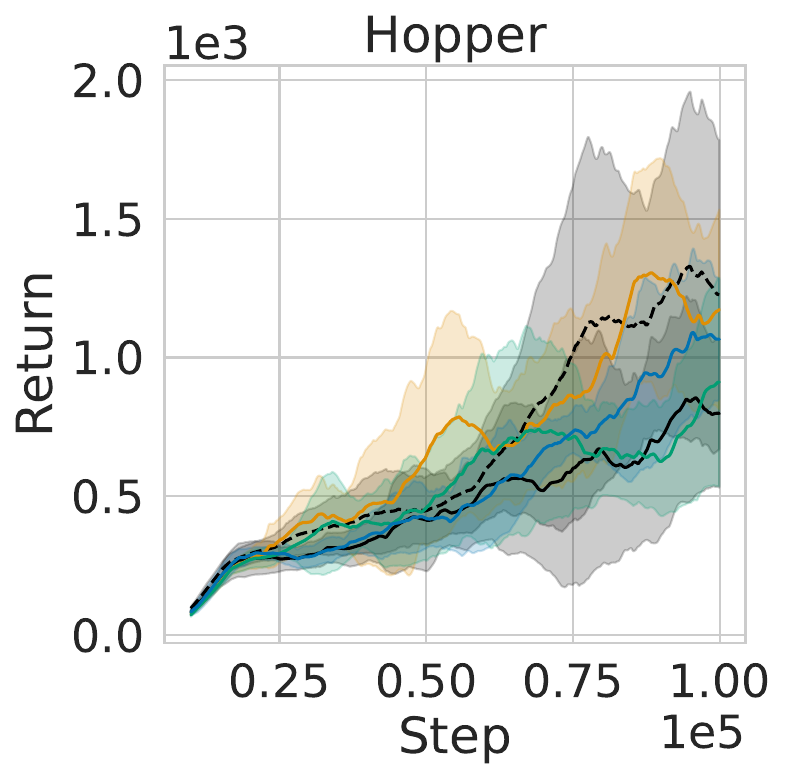}
     \end{subfigure}
     \vfill
     \begin{subfigure}[b]{0.95\columnwidth}
         \centering
         \includegraphics[width=\textwidth]{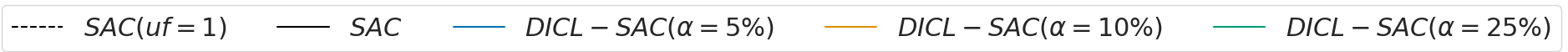}
     \end{subfigure}
    \caption{\rebuttal{\textbf{Data-augmented off-policy RL.} Comparison with SAC in the default update frequency regime. We conducted this experiment using the \textit{Llama 3.2-1B} model.}}
    \label{fig:app_return_sacdefault}
\end{figure}

\section{What is the impact of the policy on the prediction error?}
\label{appendix:policy}

\begin{figure}[ht]
  \begin{center}
    \includegraphics[width=0.5\textwidth]{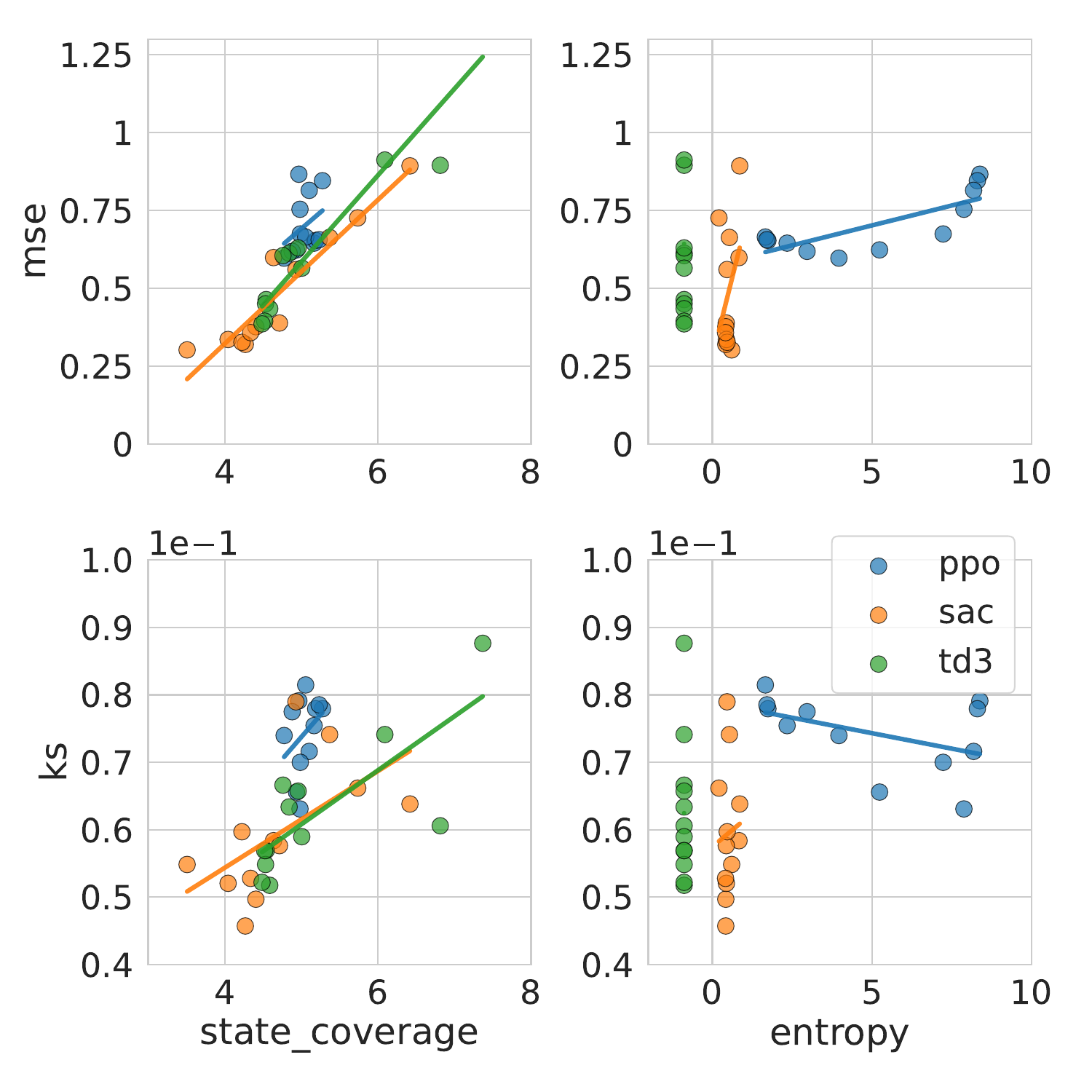}
  \end{center}
  \caption{Correlation plots between state coverage and entropy of policies with MSE and KS metrics under the \vICL dynamics learner.}
  \label{fig:state_cov}
\end{figure}

In this experiment, We investigate how a policy impacts the accuracy and calibration of our LLM-based dynamics models. To do so, we train three model-free algorithms (PPO~\citep{schulman2017proximalpolicyoptimizationalgorithms}, SAC~\citep{Haarnoja2018}, and TD3~\citep{Fujimoto2019}) on the HalfCheetah environment, selecting different checkpoints throughout training to capture diverse policies. We then analyze the correlation between policy characteristics, specifically state coverage (defined as the maximum distance between any two states encountered by the policy) and entropy, with the Mean Squared Error and Kolmogorov-Smirnov (KS) statistic. Our findings indicate that the state coverage correlates with both MSE and KS, possibly because policies that explore a wide range of states generate trajectories that are more difficult to learn. Regarding the entropy, we can see that it also correlates with MSE, but interestingly, it does not appear to impact the calibration.

\section{Multi-step prediction errors}
\label{appendix:errors}

\paragraph{The average multi-step error.} In \cref{fig:radar}, we compute the average Mean Squared Error over prediction horizons for $h=1,\hdots,20$, and $5$ trajectories sampled uniformly from the D4RL expert dataset. For visualization purposes, we first rescale all the dimensions (using a pipeline composed of a \emph{MinMaxScaler} and a \emph{StandardScaler}) so that the respective MSEs are on the same scale. The MSE metric in \cref{table:llm} is also computed in a similar fashion, with the exception that it's average over 7 different tasks (HalfCheetah: random, medium, expert; Hopper: medium, expert; Walker2d: medium, expert).

\paragraph{The MLP baseline.} For the $MLP$ baseline, we instantiate an MLP with: $4$ layers, $128$ neurons each, and $ReLU$ activations. We then format the in-context trajectory as a dataset of $\{(s_t,a_t,s_{t+1})\}$ on which we train the MLP for $150$ epochs using early stopping and the Adam optimizer~\citep{Diederik2015}.

We now extend \cref{fig:main} to show the multi-step generated trajectories for all the dimensions of the HalfCheetah system in \cref{fig:appendix:error_halfcheetah}.

\begin{figure}[ht]
  \begin{center}
    \includegraphics[width=1.0\textwidth]{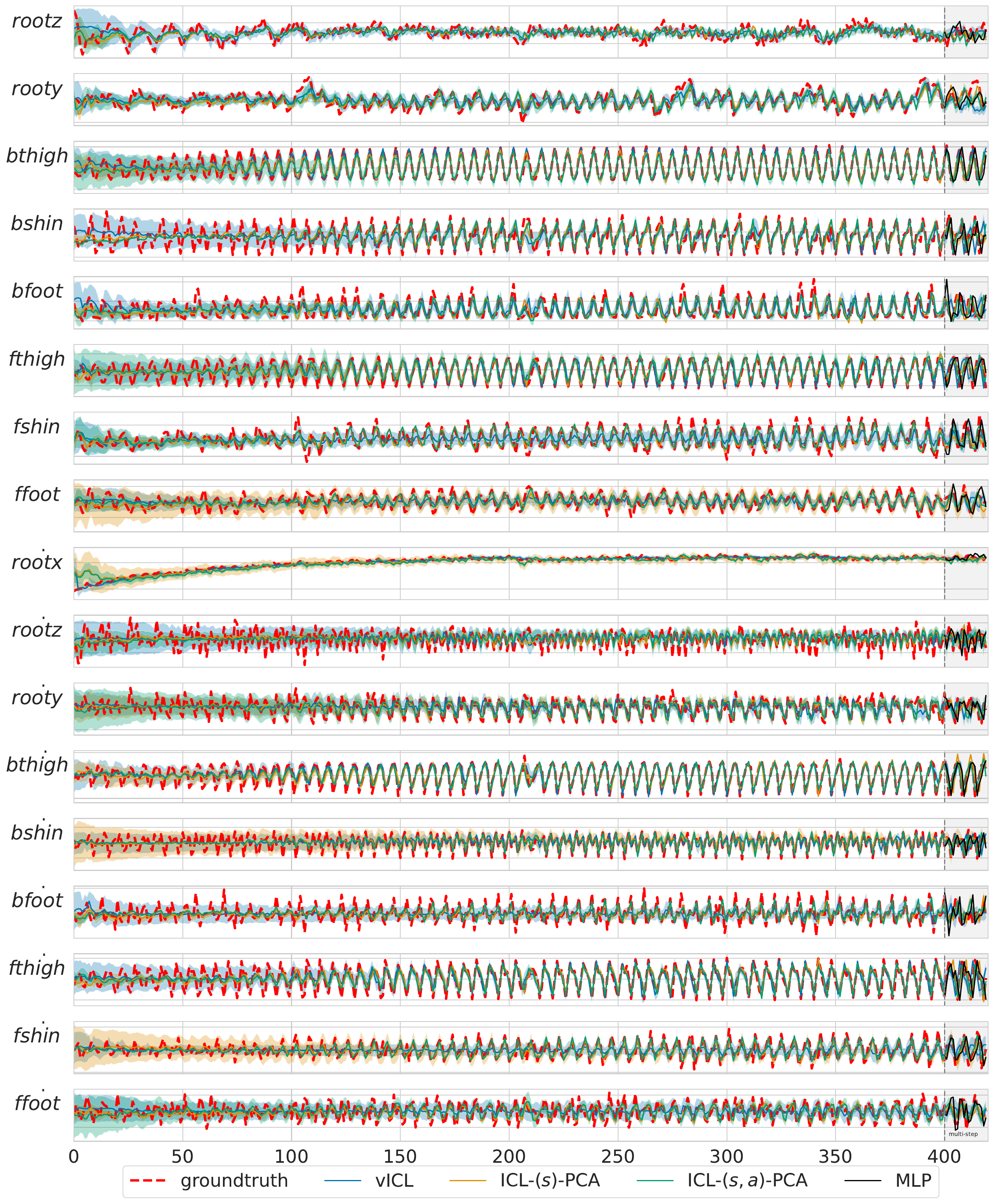}
  \end{center}
  \caption{Halfcheetah}
  \label{fig:appendix:error_halfcheetah}
\end{figure}

\section{Calibration}
\label{appendix:calibration}

\paragraph{The naive baseline.} In the calibration plots \cref{fig:calibration,fig:appendix:calibration}, we compare the LLM-based dynamics models with a (naive) baseline that estimates a Gaussian distribution using the in-context moments (mean and variance).

\textsc{\textbf{Kolmogorov-Smirnov Statistic (KS)}}: This metric is computed using the quantiles (under the model distribution) of the ground truth values. Hypothetically, these quantiles are uniform if the error in predicting the ground truth is a random variable distributed according to a Gaussian with the predicted standard deviation, a property we characterize as \emph{calibration}. To assess this, we compute the Kolmogorov-Smirnov (KS) statistics. Formally, starting from the model cumulative distribution function (CDF) \( F_\theta(s_{t+1}|s_t, a_t) \), we define the empirical CDF of the quantiles of ground truth values by \( \mathcal{F}_{\theta, j}(x) = \frac{\big|\big\{ (s_t,a_t,s_{t+1}) \in \mathcal{D} | F^j_\theta(s_{t+1}|s_t, a_t) \leq x \big\}\big|}{N} \) for \( x \in [0,1] \). We denote by \( U(x) \) the CDF of the uniform distribution over the interval \([0,1]\), and we define the KS statistics as the largest absolute difference between the two CDFs across the dataset \(\mathcal{D}\):

\begin{equation}\label{eq:appendix:KS}
    \begin{split}
        \text{KS}(\mathcal{D}; \theta; j \in \{1,\ldots,d_s \}) &= \\
        & \max_{i \in \{1,\ldots,N\}} \left|\mathcal{F}_{\theta, j}(F^j_\theta(s_{i,t+1}|s_{i,t}, a_{i,t})) - U(F^j_\theta(s_{i,t+1}|s_{i,t}, a_{i,t})) \right|
    \end{split}
\end{equation}

The KS score ranges between zero and one, with lower values indicating better calibration.

\begin{figure}[ht]
  \begin{center}
    \includegraphics[width=1.0\textwidth]{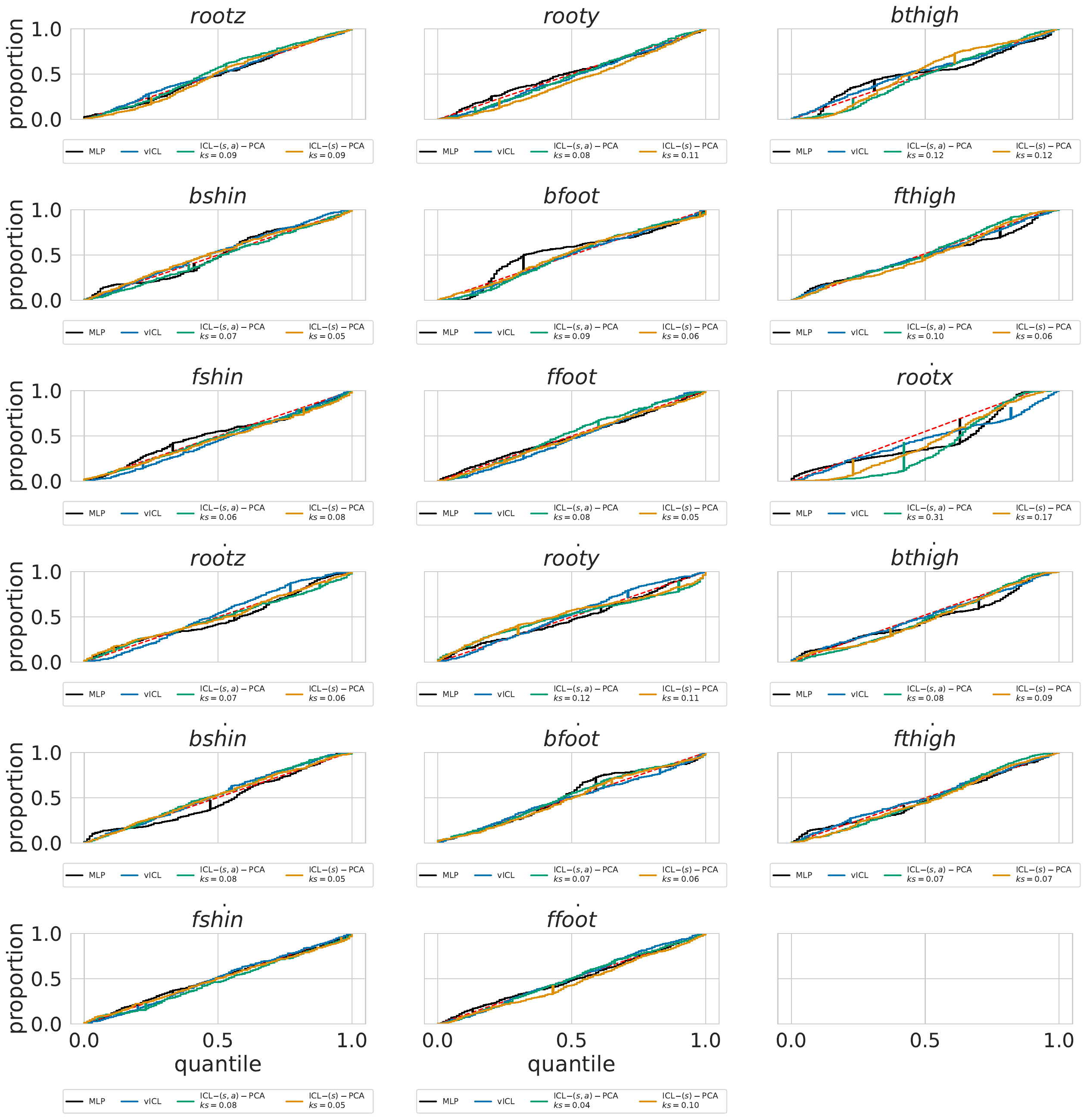}
  \end{center}
  \caption{Halfcheetah}
  \label{fig:appendix:calibration}
\end{figure}

\section{On the choice of the LLM}
\label{appendix:llm}

\rebuttal{In this ablation study, we investigate the impact of LLM size on prediction performance and calibration on D4RL tasks. The LLMs analyzed are all from the LLaMA 3 family of models \citep{dubey2024llama3herdmodels}, with size range from 1B to 70B parameters, including intermediate sizes of 3B and 8B. Each model is fed with $5$ randomly sampled trajectories of length $T=300$ from the D4RL datasets: expert, medium, and random. This latter task is only evaluated on HalfCheetah, since the Hopper and Walker2d environments random policies episodes do not have enough context yet to apply DICL. For the medium and expert datasets, we evaluate them on all the environments HalfCheetah (\cref{fig:appendix:ablation_llm_halfcheetah}), Hopper (\cref{fig:appendix:ablation_llm_hopper}), and Walker2d (\cref{fig:appendix:ablation_llm_walker}). The metrics used to evaluate the models are:}

\begin{itemize}
    \item \rebuttal{Mean Squared Error (MSE): Applied after rescaling the data similarly to \cref{appendix:errors} to measure the prediction error.}
    \item \rebuttal{Kolmogorov-Smirnov (KS) statistic: To evaluate calibration, indicating how well the predicted probabilities match the observed outcomes. This metric is formally described in \cref{appendix:calibration}.}
\end{itemize}

\rebuttal{All results are averaged over prediction horizons \( h \in \{1, \ldots, 20\} \). In the HalfCheetah environment, we observe that \sPCA consistently outperforms the other variants across all tasks and with almost all LLMs in terms of prediction error. \saPCA is outperformed by \vICL in the \emph{random} and \emph{medium} datasets, while its performance improves in the \emph{expert} dataset. This is likely because the policy has converged to a stable expert policy, making it easier for \saPCA to predict actions as well. Regarding calibration, the three methods generally perform similarly, with a slight advantage for \saPCA, especially with smaller LLMs. In the Hopper environment, the MSE improvement of DICL over \vICL is less pronounced with the smallest LLMs but becomes more evident with the LLaMA 3.1 70B model. However, \saPCA consistently and significantly outperforms both \vICL and \sPCA in terms of the KS statistic (calibration). In the Walker2d environment, \vICL proves to be a strong baseline in the expert task, while \sPCA shows improvements over it in the medium dataset. For calibration in Walker2d, \saPCA continues to outperform the other variants across all tasks and LLM sizes.}

\begin{figure}[ht]
     \centering
     \begin{subfigure}[b]{0.26\columnwidth}
         \centering
         \includegraphics[width=\textwidth]{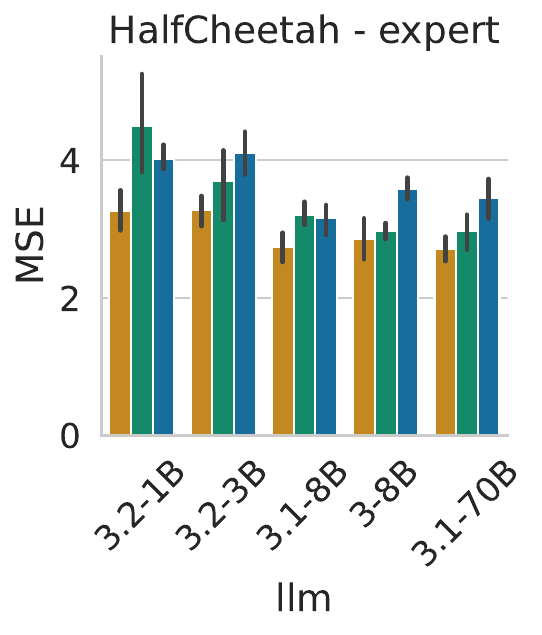}
     \end{subfigure}
      \begin{subfigure}[b]{0.26\columnwidth}
         \centering
         \includegraphics[width=\textwidth]{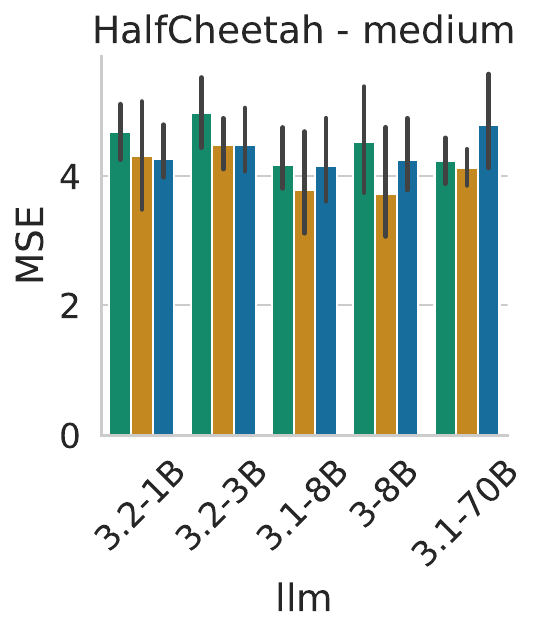}
     \end{subfigure}
       \begin{subfigure}[b]{0.38\columnwidth}
         \centering
         \includegraphics[width=\textwidth]{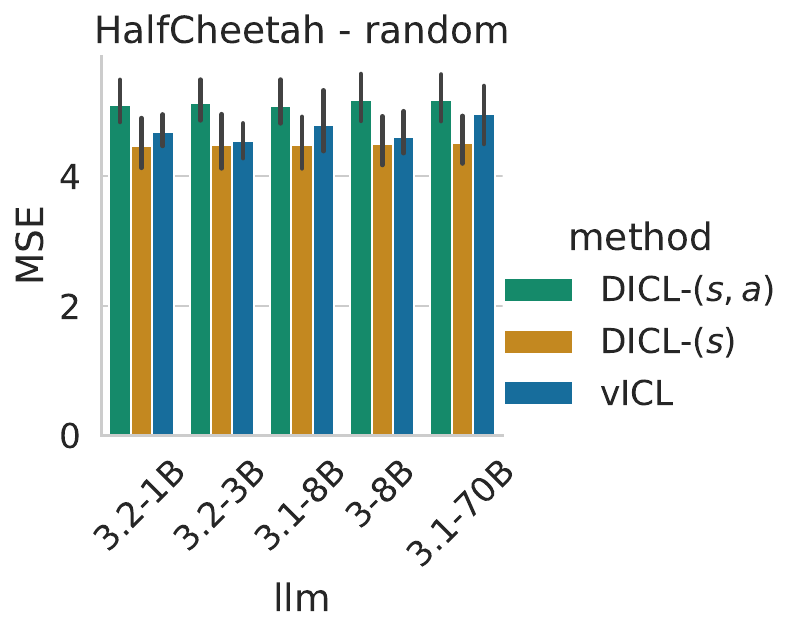}
     \end{subfigure}
     \vfill
     \begin{subfigure}[b]{0.26\columnwidth}
         \centering
         \includegraphics[width=\textwidth]{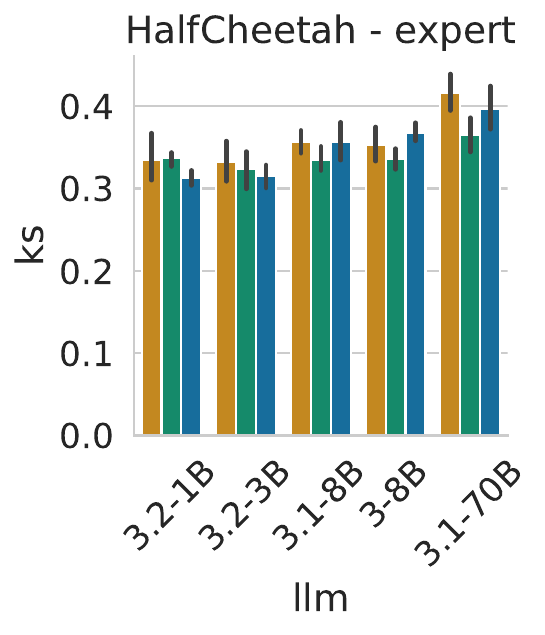}
     \end{subfigure}
      \begin{subfigure}[b]{0.26\columnwidth}
         \centering
         \includegraphics[width=\textwidth]{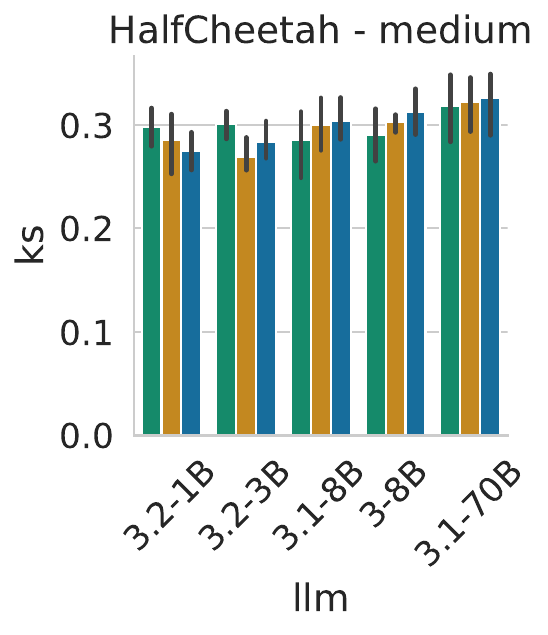}
     \end{subfigure}
       \begin{subfigure}[b]{0.38\columnwidth}
         \centering
         \includegraphics[width=\textwidth]{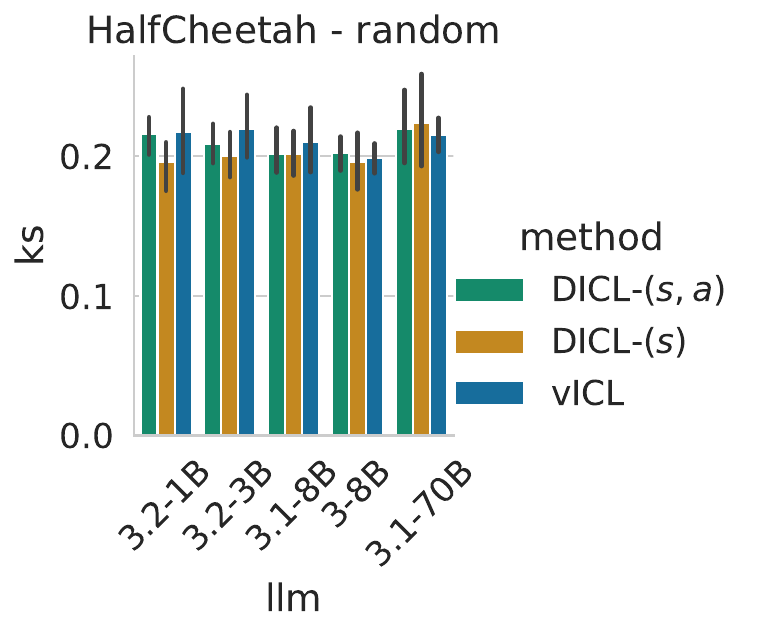}
     \end{subfigure}
  \caption{HalfCheetah.}
  \label{fig:appendix:ablation_llm_halfcheetah}
\end{figure}

\begin{figure}[ht]
     \centering
 \begin{subfigure}[b]{0.49\columnwidth}
     \begin{subfigure}[b]{0.40\columnwidth}
         \centering
         \includegraphics[width=\textwidth]{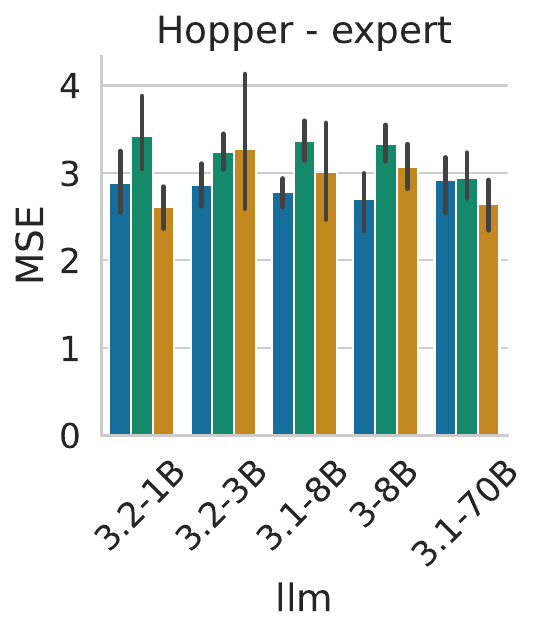}
     \end{subfigure}
      \begin{subfigure}[b]{0.59\columnwidth}
         \centering
         \includegraphics[width=\textwidth]{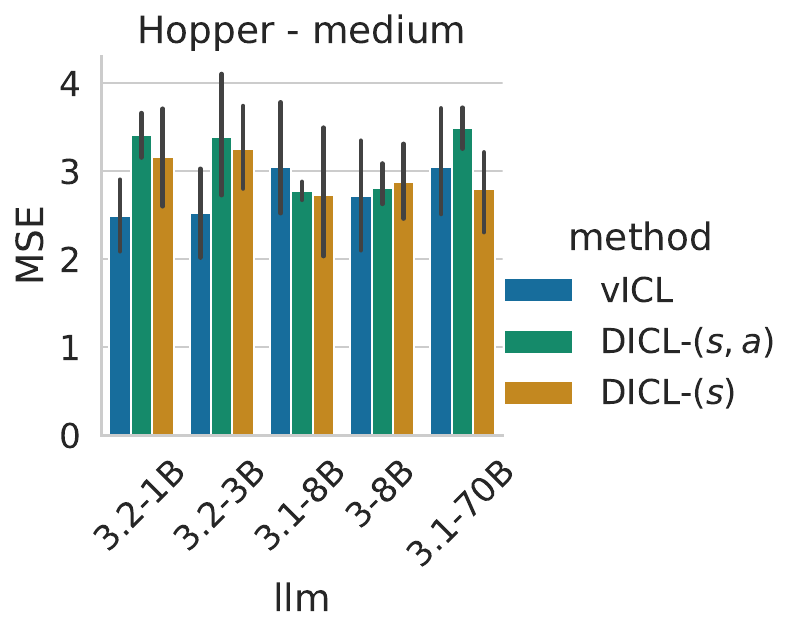}
     \end{subfigure}
     \vfill
     \begin{subfigure}[b]{0.40\columnwidth}
         \centering
         \includegraphics[width=\textwidth]{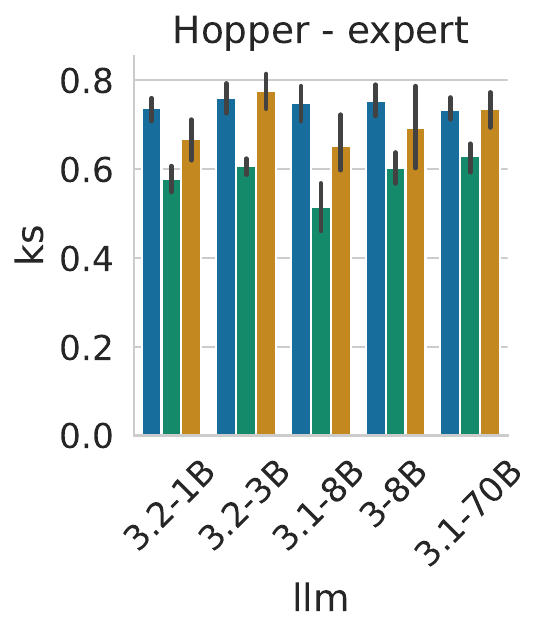}
     \end{subfigure}
      \begin{subfigure}[b]{0.59\columnwidth}
         \centering
         \includegraphics[width=\textwidth]{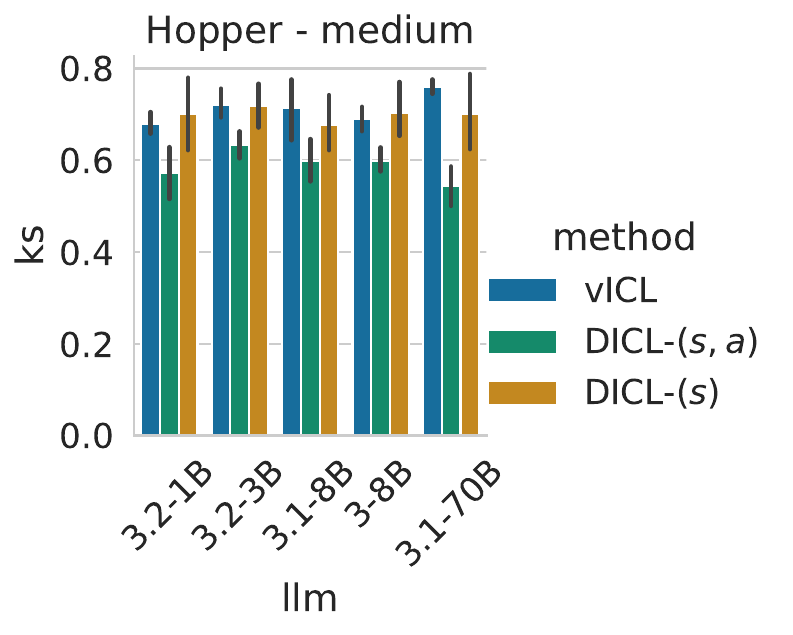}
     \end{subfigure}
  \caption{Hopper.}
  \label{fig:appendix:ablation_llm_hopper}
\end{subfigure}
\hfill
\begin{subfigure}[b]{0.49\columnwidth}
     \centering
     \begin{subfigure}[b]{0.39\columnwidth}
         \centering
         \includegraphics[width=\textwidth]{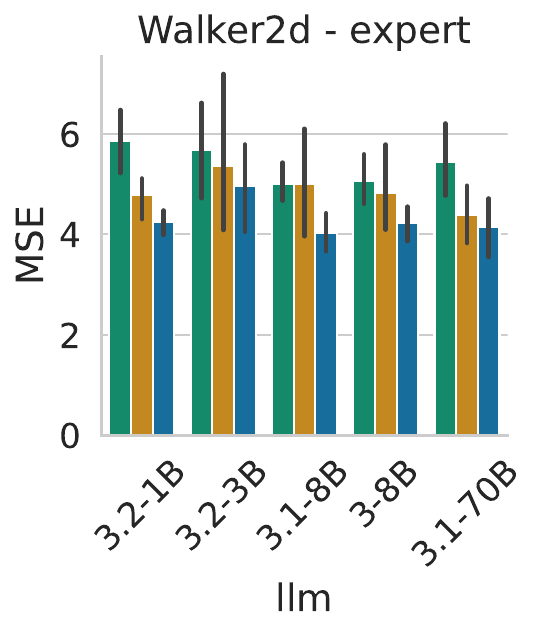}
     \end{subfigure}
      \begin{subfigure}[b]{0.59\columnwidth}
         \centering
         \includegraphics[width=\textwidth]{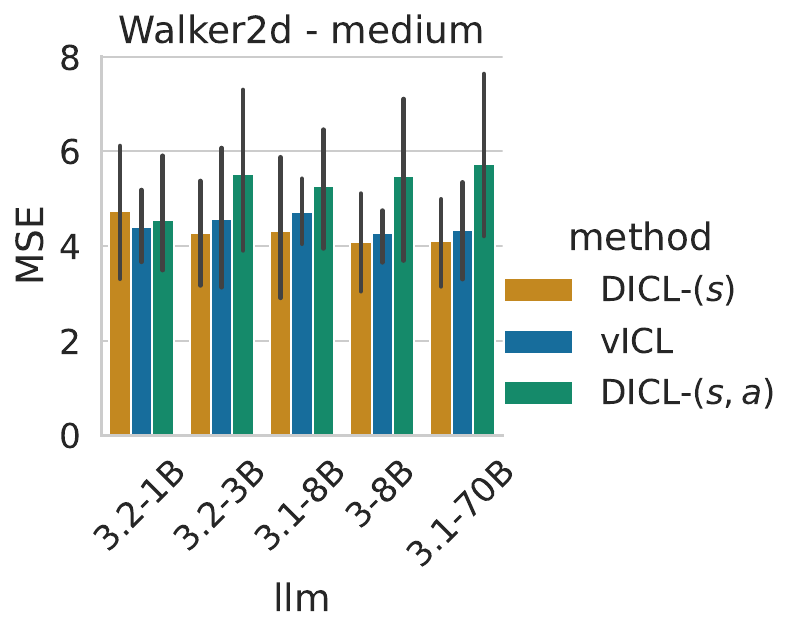}
     \end{subfigure}
     \vfill
     \begin{subfigure}[b]{0.39\columnwidth}
         \centering
         \includegraphics[width=\textwidth]{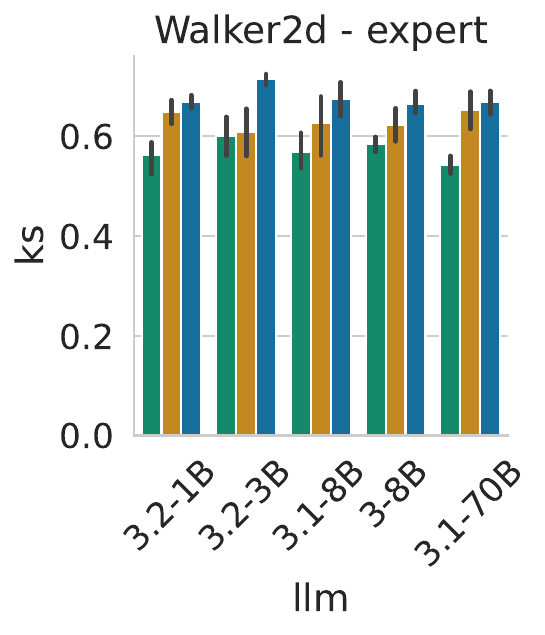}
     \end{subfigure}
      \begin{subfigure}[b]{0.59\columnwidth}
         \centering
         \includegraphics[width=\textwidth]{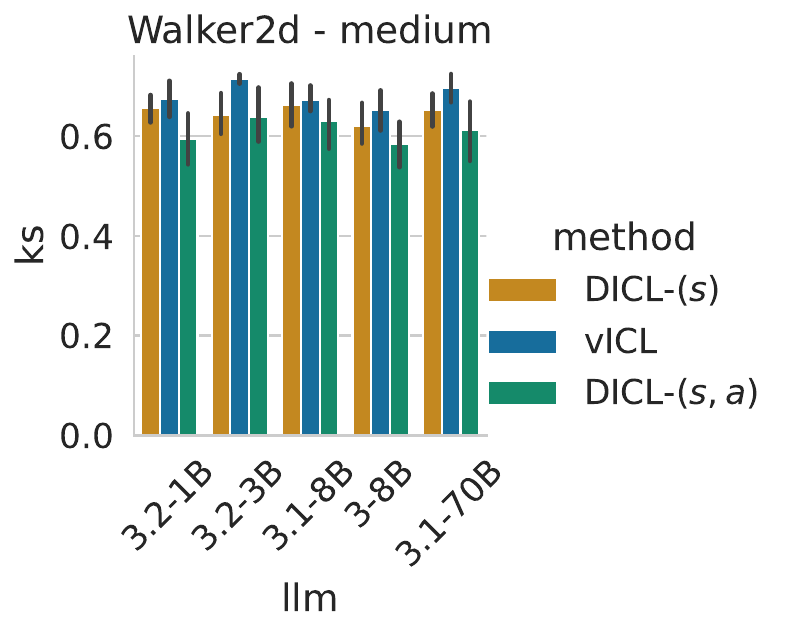}
     \end{subfigure}
  \caption{Walker2d.}
  \label{fig:appendix:ablation_llm_walker}
\end{subfigure}

\end{figure}

\end{document}